\DeclareMathOperator*{\argmin}{arg\,min\ }
\newtheorem{theorem}{Theorem}
\newtheorem{proposition}{Proposition}%
\newtheorem{lemma}{Lemma}%
\newtheorem{corollary}{Corollary}
\newtheorem{remark}{Remark}%
\newtheorem{definition}{Definition}%
\newcommand{\one}{\mathbbm{1}}
\DeclareMathOperator{\Tr}{Tr}
\newcommand{\R}{\mathbb{R}}
\newcommand{\cmark}{\ding{51}}%
\newcommand{\wmark}{\ding{55}}
\newcommand{\x}{\mathbf{x}}
\newcommand{\X}{\mathbf{X}}
\newcommand{\w}{\mathbf{w}}
\newcommand{\y}{\mathbf{y}}
\newcommand{\cb}{\mathbf{c}}
\newcommand{\z}{\mathbf{z}}
\newcommand{\yt}{\tilde{\mathbf{y}}}
\newcommand{\wpr}{\mathbf{w}_{proj}}
\DeclareMathOperator{\CH}{CH}
\DeclareMathOperator{\DT}{DT}
\renewcommand{\emph}[1]{\textit{#1}}
\newcommand{\unemph}[1]{\textup{#1}}
\begin{document}

\title{Locality Regularized Reconstruction: Structured Sparsity and Delaunay Triangulations}


\author{\hspace{100pt}Marshall Mueller\footnote{Department of Mathematics, Tufts University, Medford, MA 02155, USA (\url{marshallm@protonmail.ch}, \url{jm.murphy@tufts.edu}, \url{abiy.tasissa@tufts.edu})}  \footnote{Corresponding author}
 \hspace{95pt}\vspace{5pt}
 \and \hspace{-0.5em}James M. Murphy\footnotemark[1] \and \hspace{-20pt} \hspace{-0.3em}Abiy Tasissa\footnotemark[1] }
\maketitle

\begin{abstract}
    Linear representation learning is widely studied due to its conceptual simplicity and empirical utility in tasks such as compression, classification, and feature extraction.  Given a set of points $[\x_1, \x_2, \ldots, \x_n] = \X \in \R^{d \times n}$ and a vector $\y \in \R^d$, the goal is to find coefficients $\w \in \R^n$ so that $\X \w \approx \y$, subject to some desired structure on $\w$. 
In this work we seek $\w$ that forms a local reconstruction of $\y$ by solving a regularized least squares regression problem.  We obtain local solutions through a locality function that promotes the use of columns of $\X$ that are close to $\y$ when used as a regularization term.  We prove that, for all levels of regularization and under a mild condition that the columns of $\X$ have a unique Delaunay triangulation, the optimal coefficients' number of non-zero entries is upper bounded by $d+1$, thereby providing local sparse solutions when $d \ll n$.  Under the same condition we also show that for any $\y$ contained in the convex hull of $\X$ there exists a regime of regularization parameter such that the optimal coefficients are supported on the vertices of the Delaunay simplex containing $\y$.  This provides an interpretation of the sparsity as having structure obtained implicitly from the Delaunay triangulation of $\X$.  We demonstrate that our locality regularized problem can be solved in comparable time to other methods that identify the containing Delaunay simplex.
\end{abstract}

\section{Introduction}

In various applications such as genomics \cite{libbrecht2015machine} and natural language processing \cite{vaswani2017attention}, the underlying data is high-dimensional and characterized by significant noise. Data in this raw form is not conducive to numerical computation or analysis. In light of this, feature extraction methods play a crucial role in acquiring representations of the data that facilitate its use in subsequent tasks while preserving certain aspects of the original data. These representations are commonly realized as points in Euclidean space and often exhibit desirable features such as low-dimensionality and sparsity. Widely employed methods for this purpose include principal component analysis and its generalizations \cite{hotelling1933analysis, hyvarinen2000independent, candes2011robust}, manifold learning techniques \cite{scholkopf1997kernel, tenenbaum2000global, roweis2000nonlinear, belkin2003laplacian, coifman2006diffusion}, as well as tailored features for specific applications (e.g., histogram of gradients (HOG) and scale invariant feature transform (SIFT) in image processing using local information for each pixel  \cite{dalal2005histograms,lowe1999object}).  In the regime where large volumes of labeled training data are available, neural network architectures such as convolutional neural networks (CNN) \cite{krizhevsky2012imagenet,hershey2017cnn} and transformers \cite{vaswani2017attention, wolf2020transformers} have been utilized for representation learning.  In this paper, we focus on sparse features in the setting of structured sparse coding.

Sparse coding is a representation learning model that posits that data can be expressed as a combination of vectors known as dictionary atoms. The coefficients of this combination are assumed to be sparse. Specifically, given a predefined matrix $\mathbf{X}=[\mathbf{x}_1, \mathbf{x}_2, \ldots, \mathbf{x}_n] \in \mathbb{R}^{d \times n}$, the objective for linear sparse coding is to represent a vector $\mathbf{y} \in \mathbb{R}^d$ as a linear combination of $n$ vectors, approximating $\mathbf{y}$ as $\mathbf{X}\mathbf{w}$, where $\mathbf{w}\in\mathbb{R}^{n}$ is a sparse vector. A vector $\mathbf{w}$ is considered $k$-sparse if $\Vert\mathbf{w}\Vert_0 \le k\le n$, where $\Vert\cdot\Vert_0$ denotes the $\ell_0$ quasi-norm.  This counts the cardinality of the support of $\w$,  namely the number of non-zero entries. Various sparsity-promoting regularizers can be employed for recovering a sparse representation including $\ell_p$ norm regularizers such as $\ell_0$ \cite{cai2011orthogonal, needell2009cosamp}, $\ell_1$ \cite{candes2006near, candes2006stable}, $\ell_p$ for $0<p\leq 1$ \cite{chartrand2007exact,foucart2009sparsest}, as well as entropy minimization \cite{huang2018sparse}. In structured sparse coding, additional prior knowledge about the sparse codes is available. For instance, partial support of $\w$ may be known a priori, or there could be group or hierarchical structures in the sparse coefficients
\cite{khajehnejad2009weighted,vaswani2010modified,huang2010benefit,yuan2006model,sprechmann2010collaborative,foucart2011recovering}. In \cite{roweis2000nonlinear,elhamifar2011sparse}, another form of structured sparsity---local sparsity---is considered. Therein, the idea is to promote representation of a data point using nearby dictionary atoms. The present paper specifically focuses on this notion of sparsity under the assumption that data points are generated from
structured triangulations. 

Our model for structured sparse coding is grounded in Delaunay triangulations \cite{delaunay1934sphere}. These triangulations have found widespread application in computational geometry \cite{cheng2016delaunay}, computational fluid dynamics \cite{weatherill1992delaunay}, and geometric scale detection \cite{gillette2022data}.
Moreover, when using linear interpolation, the optimality of these triangulations with respect to minimal approximation error has been studied \cite{omohundro1989delaunay,chen2004optimal}. In our context, the dictionary atoms define a unique Delaunay triangulation, with each data point lying within the convex hull of these atoms. In this framework, exact local sparsity entails representing a data point using only the vertices of the simplex to which it belongs. The coefficients, by construction, lie within the probability simplex $\Delta^{n}:=\{\w=(w_1,w_2,\dots,w_n) \ | \ w_{i}\ge 0 \ \forall i, \  \sum_{i=1}^{n}w_{i}=1\}$.  For a fixed dictionary $\mathbf{X}$, the \emph{locality} of a representation parameterized by $\mathbf{y}$ is defined as
\begin{equation*}
    \ell_\y(\w) := \sum_{i=1}^n w_i \Vert \x_i - \y \Vert^2. 
\end{equation*}
Locality encourages non-zero values for $w_i$ only when $\mathbf{x}_i$ is in close proximity to $\mathbf{y}$. When there exists at least one $\mathbf{w}$ such that $\mathbf{X} \mathbf{w} = \mathbf{y}$, the \emph{exact} locality regularized coding problem can be considered:
\begin{align} \label{eqn:exact}
        \argmin_{\w \in \Delta^n} \quad  \sum_{i=1}^n w_i \Vert \x_i - \y \Vert^2 \quad \text{s.t.} \quad \X\w = \y. \tag{E} 
\end{align}
Under mild assumptions, \cite{tasissa2023k} demonstrates that the optimal solution to \eqref{eqn:exact} is supported on the vertices of the Delaunay simplex containing $\mathbf{y}$; we refer to Theorem \ref{thm:TSP_2023_E} for a precise statement. In many applications, the underlying data is susceptible to noise, rendering the exact problem inapplicable within that context. Consequently, one can relax the exact problem to the locality regularized least squares problem for some balance parameter $\rho>0$ as follows:
\begin{align}\label{eqn:relaxed}
\argmin_{\w \in \Delta^n} \quad \frac{1}{2}\lVert \mathbf{X}\mathbf{w} - \mathbf{y}\rVert^2 + \rho \sum_{i=1}^n w_i \lVert \mathbf{x}_i - \mathbf{y} \rVert^2.\tag{R}
\end{align}
The central focus of this paper is to assert that the aforementioned relaxation (\ref{eqn:relaxed}) preserves the advantages of the exact problem (\ref{eqn:exact}). Indeed, we will establish that provable guarantees for sparse recovery remain unaltered with an appropriate choice of $\rho$. In \cite{tasissa2023k}, the main algorithm utilizes a regularized objective, similar to other dictionary learning algorithms that incorporate objectives such as $\ell_1$ regularization. The attractiveness of employing this regularization lies in its ability to facilitate the use of first-order methods \cite{beck2017first}, offering computational advantages, particularly for large-scale problems. Hence, our focus is directed towards establishing the theoretical properties of this computationally preferred problem.

\subsection{Summary of Contributions}
We present and analyze the regularized minimization problem, \eqref{eqn:relaxed}, to find a local and sparse representation of $\y$ in terms of $\X$. Our analysis only requires that $\X$ be in general position, which is a weak requirement in practice.\\ 

\begin{enumerate}[label=(\roman*)]
\item  Our main result is stated in Theorem \ref{thm:main_R}: when $\y$ lies in the convex hull of $\X$, $\eqref{eqn:relaxed}$ can be solved to identify the $d$-simplex of the Delaunay triangulation of $\X$ containing $\y$ when $\rho$ is less than a bound depending only on $\y$ and $\X$. 
Additionally, we show that the solution $\w_\rho$ to (\ref{eqn:relaxed}) is such that $\X\w_\rho$ converges linearly in $\rho$ to $\y$.\\ 

\item  When $\y$ is not contained in the convex hull of $\X$, we show that $\X \w_\rho$ converges linearly in $\rho$ to the projection of $\y$ onto the convex hull of $\X$.  Relatedly, for small enough $\rho$ we show that the solution $\w_\rho$ will have at most $d+1$ nonzero entries coinciding with the vertices of the $d$-simplex whose face contains the projection of $\y$ onto the convex hull of $\X$. \\

\item 
We establish a connection between the optimal sparse solution derived from our optimization and the task of determining the simplex within a Delaunay triangulation to which a point belongs, which is important in the context of interpolation \cite{chang2020algorithm}. We offer a perspective based on structured sparse recovery for this problem. We also demonstrate that our algorithm performs comparably to baseline algorithms in terms of efficiency.\\

\end{enumerate}

\noindent We provide code to solve \eqref{eqn:relaxed} efficiently using CVXOPT in Python and replicate our figures here: \url{https://github.com/MarshMue/LocalityRegularization}.

\subsection{Paper Organization}
The remainder of the paper is organized as follows.  In Section \ref{sec:background} we provide necessary theoretical background to understand our results as well as related work as it pertains to sparse representation and identifying the containing Delaunay simplex. Section \ref{sec:theory} provides the details of our theoretical analysis. We validate our theoretical results through various experiments in Section \ref{sec:experiments} before concluding in Section \ref{sec:conclusion}. 
 Details on the practical optimization of \eqref{eqn:relaxed} are provided in Appendix \ref{sec:app-comp}. 

\subsection{Notation}
We use boldface capitalized letters (e.g., $\X$) to denote matrices and boldface lower case letters (e.g., $\y$) to denote vectors. $\one_{d}$ denotes the vector of ones with length $d$.  $\Tr(\mathbf{X})$ denotes the trace of a matrix. Let $\CH(\X) = \{\y \in \R^d \ | \ \exists \w \in \Delta^n, \ \X \w = \y \}$ denote the convex hull of $\X$. The terms ``point'' and ``vector'' will be used interchangeably depending on the context.  We use $S\subset\mathbb{R}^{d}$ to denote the $(d+1)$ vertices that constitute a $d$-simplex $\CH(S)$; precise discussion of simplices is in Section \ref{sec:background}. Furthermore, we use $\partial S$ to denote the boundary of $\CH(S)$.  The $\ell^{2}$-norm will be written as $\Vert \cdot \Vert$ while other $\ell^{p}$ norms will be written as $\Vert \cdot\Vert_{p}$.

\section{Background}\label{sec:background}

\subsection{Representation Learning via Statistical Signal Processing}

Many practical problems in statistical signal processing can be modeled through the least squares
problem $\argmin_{\w\in \R^n} \frac{1}{2}\Vert\y - \X\w\Vert^2$. In the typical case $n>d$, the problem is an underdetermined
least squares problem which has infinitely many solutions. A common approach to address this is to introduce the regularizer $\Vert\w\Vert^2$ into the objective, resulting in the following problem: $\argmin_{\w\in \R^n} \frac{1}{2}\Vert\y - \X\w\Vert^2+\rho \Vert\w\Vert^2$ where $\rho>0$ is a regularization parameter. This regularization, known as \emph{$\ell_2$} or \emph{Tikhonov regularization}, ensures that there
is a unique solution for $\rho>0$ that admits a closed form. 

In many signal processing
applications, it is often assumed that, after an appropriate linear transformation, the underlying solution is sparse, indicating very few nonzero entries in the signal. Given that, a natural optimization program is based on the $\ell_0$ quasi-norm which solves $\argmin_{\w\in \R^n} \frac{1}{2}\Vert\y-\X\w\Vert^2+\rho\Vert\w\Vert_0$ and thereby enforces sparsity of $\w$ directly by making $\Vert\w\Vert_{0}$ small. However, the $\ell_0$ regularization is not amenable to optimization as it is generally NP-hard. Under certain assumptions on $\X$,
iterative greedy methods such as orthogonal matching pursuit (OMP) \cite{davis1997adaptive,pati1993orthogonal,tropp2007signal} can be employed for the $\ell_0$ minimization problem.
A computationally viable alternative is $\ell_1$ regularization, resulting in the optimization program $\argmin_{\w\in \R^n} \frac{1}{2}\Vert\y-\X\w\Vert^2+\rho\Vert\w\Vert_1$. This is known as the basis pursuit problem \cite{tropp2004greed,chen2001atomic}. Extensive studies have explored the recovery of exact or approximate solutions for the $\ell_1$ regularized problem under certain conditions on $\X$ \cite{foucart2013mathematical}. In some cases, additional prior information about the support of the sparse vector is available. For this setting, the works in
\cite{lian2018weighted,mansour2017recovery,vaswani2010modified} consider weighted $\ell_1$ minimization of the following form: 
\[
\underset{\w\in \R^n}{\argmin}\quad \frac{1}{2}\Vert\y-\X\w\Vert^2+\rho\Vert\w\Vert_{1,\mathbf{\alpha}} ,
\]
where $\Vert\w\Vert_{1,\mathbf{\alpha}}:=\sum_{i=1}^{n} \alpha_i \vert w_i \vert $ and 
$\mathbf{\alpha} \in \R^{n}$ denotes the vector of weights. In a broader context, the prototypical
optimization problem for a structured sparse recovery problem can be written as:
\[
\underset{\w\in \R^n}{\argmin}\quad \frac{1}{2}\Vert\y-\X\w\Vert^2+ \rho R(\w),
\]
where $R(\w)$ is the regularization function that promotes the recovery of solutions with the desired prior. We note that, while this paper focuses on linear sparse coding,
which posits a linear relationship between $\y$ and $\w$, the framework is adaptable to the non-linear setting
leading to the non-linear sparse coding problem \cite{ho2013nonlinear,werenski2022measure, do2023approximation, mueller2023geometrically}.

\subsection[Triangulations on Rd]{Triangulations on $\mathbb{R}^{d}$}

\begin{figure}
    \centering
    \includegraphics[width=0.99\linewidth]{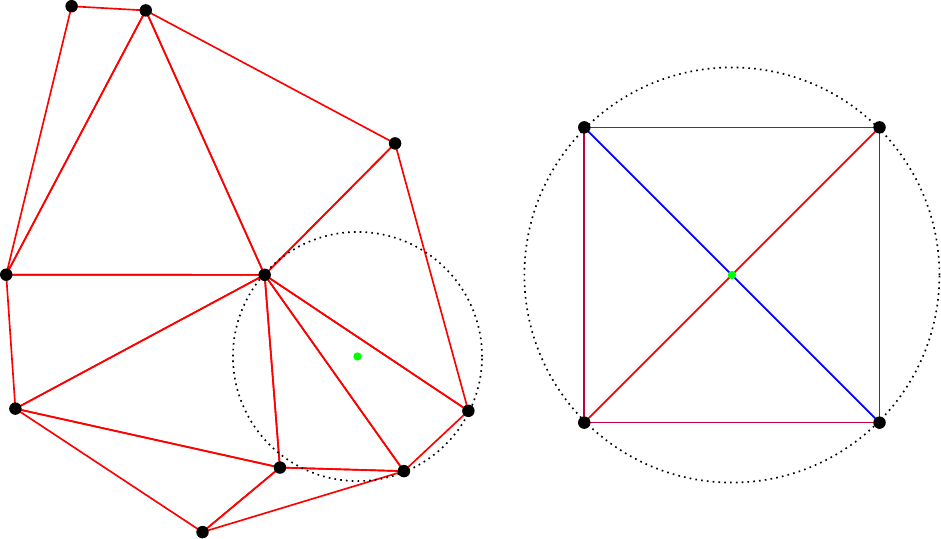}
    \caption{\emph{Left:} An example of a Delaunay triangulation in $\mathbb{R}^2$ with the empty circumscribing hypersphere condition satisfied for a queried triangle containing the green dot. 
\emph{Right:}  An example of a point configuration in $\mathbb{R}^{2}$ with non-unique Delaunay triangulation.  Either the red or blue edge together with the four outer edges generates a Delaunay triangulation. Note that the $4=d+2$ generating points are not in general position; they all lie on a common circle.}
    \label{fig:empty_circumscribing_hypersphere}
\end{figure}

Our approach to local sparsity is closely connected to triangulations in $\mathbb{R}^{d}$, which we review now. 

A \emph{$d$-simplex} is the convex hull of a set of $d+1$ points $\{\x_0,\x_1,..,\x_d\}\subset\R^d$. 
Concrete examples are a point (0-simplex) and a line segment (1-simplex). The $d+1$ points
that constitute the $d$-simplex are denoted as the \emph{vertices} of the simplex.  A \emph{$k$-face} of a $d$-simplex is the convex combination of a subset of $k+1$ vertices of the $d$-simplex.  Concrete examples include a point (a $0$-face), an edge (a $1$-face), a triangular face (a $2$-face), and the $d$-simplex itself (a $d$-face). A \emph{triangulation} of $\X$ is a set of $d$-simplices whose union is the convex hull of $\X$, and the intersection of any two $d$-simplices is either empty or a common $(d-1)$-face. 

\begin{definition}
A \unemph{Delaunay triangulation} of $\X$, $\DT(\X)$, is defined as a triangulation such that the circumscribing hypersphere of every $d$-simplex does not contain any other point of $\X$ in its interior. 
\end{definition}
A Delaunay triangulation $\DT(\X)$ always exists provided that $\X$ does not lie on any $k$-dimensional hyperplane for $k<d$. For $\DT(\X)$ to be unique, we must have that $\X$ lie in \emph{general position}: no $d+2$ points lie on the same hypersphere and the affine hull of $\X$ is $d$-dimensional. The requirement of $\X$ lying in general position is weak and is almost always satisfied by unstructured point clouds such as uniform samples taken from the hypercube. A notable exception to this is points representing the vertices of a regular grid, which will not be in general position as each grid cell will have all $2^d>d+1$ vertices lying on the same $d$-dimensional circumscribing hypersphere when $d\ge2$. This is important in the context of interpolation for scientific computing and may necessitate bespoke methods when working with resulting non-unique Delaunay triangulations \cite{chang2020algorithm}.  See Figure \ref{fig:empty_circumscribing_hypersphere} for examples. 

\subsubsection{Identifying A Delaunay Simplex}
Suppose that $\y$ lies in $\CH(\X)$ with a unique Delaunay triangulation $\DT(\X)$. One can then ask: what is the $d$-simplex of $\DT(\X)$ that contains $\y$? In this section we review two main existing ways to solve this problem.\\

\noindent\textbf{Convex Hull Linear Program:}\label{sec:CHLP}  $\DT(\X)$ can be identified by the \emph{lower faces} of the convex hull formed by ``lifting'' $\X$ via the lifting map $\x \mapsto (\x, \Vert\x \Vert^2) \in \R^{d+1}$ \cite{edelsbrunner1985voronoi}. 
A face of the convex hull is considered \emph{lower} if its inward normal has a positive component in the $(d+1)^{st}$ component. This connection can be utilized in order to determine the $d$-simplex containing a point $\y$; a point contained within a simplex will lie below the lower face corresponding to the $d$-simplex containing $\y$. This can be determined by solving a linear program that is interpretable as finding the first lower face of the convex hull that intersects with a ray shooting up in the $(d+1)^{st}$ dimension \cite{fukuda2004frequently,chang2020algorithm}. The linear program is stated as:
\begin{align}\label{eqn:convexHullLP}
    \min_{\cb \in \R^{n}, z \in \R} -\cb^T \y - z \quad \text{s.t.}\quad \begin{bmatrix}
        \X^T & \one_n
    \end{bmatrix} \begin{bmatrix}
        \cb \\ z
    \end{bmatrix}  \leq \mathbf{b},
\end{align}
where $b_i = \Vert\x_i\Vert^2$. We provide an illustration of this method in Figure \ref{fig:cvxLP}.

\begin{algorithm}
\caption{Convex Hull LP}\label{alg:convexHullLP}
\begin{algorithmic}[1]
\Require $\y \in \CH(\X)$
\State Solve \eqref{eqn:convexHullLP} to obtain $\cb^*, z^*$
\State Set simplex $S \leftarrow \{ \x_i : \x_i^T \cb^* + z^* = b_i \}$
\State \Return $S$
\end{algorithmic}
\end{algorithm}

\begin{figure}
    \centering
    \includegraphics[width=0.49\linewidth, trim=1cm 0.8cm 1cm 1.9cm]{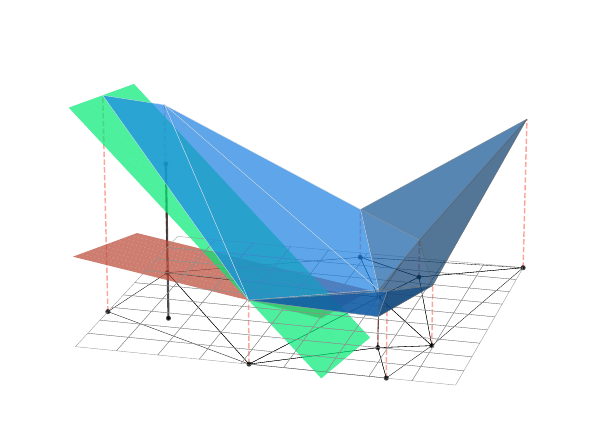}
    \includegraphics[width=0.49\linewidth]{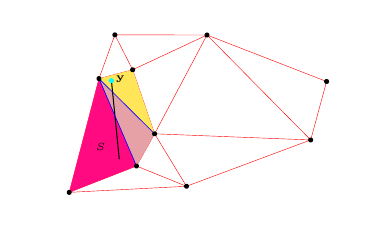}
    \caption{\textit{Left:} A visualization of the linear program described in Section \ref{sec:CHLP}. Both the 2D Delaunay triangulation and lower faces of the convex hull of the lifted points, coinciding with the triangulation, are shown. Points are connected to their lifted versions by dashed red lines. The red and green hyperplanes are interpretable as two solutions to \eqref{eqn:convexHullLP}; the green one represents an optimal solution while the red one represents a suboptimal solution. The objective measures the length of the ray, drawn as a black line, from $\y$ on the plane to the hyperplane. The constraints prohibit the ray from passing through a lower face and hence the hyperplane intersecting the lower face is optimal.  \textit{Right:} A viable visibility walk for DelaunaySparse. The colored triangles denote those on the path with $S$ labeling the initial triangle. A line is drawn through each triangle on the walk and the faces from which $y$ is visible are intersected and colored blue. }
    \label{fig:cvxLP}
\end{figure}

\noindent\textbf{DelaunaySparse}
DelaunaySparse \cite{chang2020algorithm} operates by first constructing a valid $d$-simplex, $S$, of $\DT(\X)$ and then flips from $d$-simplex to $d$-simplex until the one containing $\y$ is found. Each new $d$-simplex to flip to is determined by picking a face $F \subset S$ that $\y$ is \emph{visible} from, which means that there exists a $\z \in \CH(S)$ such that the line drawn from $\z$ to $\y$ intersects $F$. $S$ is updated with the vertices of the neighboring $d$-simplex and the process repeats until $\y$ is contained in $\CH(S)$. Such a walk is not unique, but provably acyclic so it is guaranteed to converge \cite{edelsbrunner1989acyclicity}. We summarize this procedure in Algorithm \ref{alg:delaunaysparse} and refer the reader to the original paper for the specific subroutines required for each step.

\begin{algorithm}[htbp]
\caption{DelaunaySparse (Simplified)}\label{alg:delaunaysparse}
\begin{algorithmic}[1]
\Require $\y \in \CH(\X)$
\State  Generate an initial $d$-simplex $S$ to seed the iterative process
\While{$\y \not \in S$}
        \State Select a face $F$ of $S$ from which $\y$ is visible
        \State Complete the face $F$ into a new $d$-simplex $S^*$ 
        \State Update $S \leftarrow S^*$
\EndWhile
\end{algorithmic}
\end{algorithm}

\section{Analysis of Locality Regularization}\label{sec:theory}

\subsection{Locality As An Objective}
We first review the theory of locality when used as an objective, as in \eqref{eqn:exact}, to obtain local sparse representations. Locality was proposed in \cite{tasissa2023k} in the context of structured dictionary learning, and we recall some of their results.

\begin{lemma}\label{lem:circumdist}
Let $\X$ be in general position.  Let $I(S)$ denote a set of indices such that $\{ \x_j \ 
| \ j \in I(S)\}$ are the vertices of a $d$-simplex $S$ of $\DT(\X)$. Let $R$ and $\cb$ denote the radius and center of the circumscribing hypersphere intersecting the vertices. Then $\Vert \x_j - \cb \Vert = R$ for $j \in I(S)$ and $\Vert \x_j - \cb \Vert > R$ for $j \notin I(S)$.
\end{lemma}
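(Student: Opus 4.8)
The plan is to prove Lemma \ref{lem:circumdist} directly from the defining property of a Delaunay triangulation, using the fact that the vertices $\{\x_j : j \in I(S)\}$ of a $d$-simplex $S$ determine a \emph{unique} circumscribing hypersphere precisely because $\X$ is in general position. First I would establish existence and uniqueness of the circumcenter $\cb$ and circumradius $R$: since $\X$ is in general position, the affine hull of the $d+1$ vertices of $S$ is all of $\R^d$, so the $d$ linear equations $\Vert \x_j - \cb \Vert^2 = \Vert \x_0 - \cb \Vert^2$ (obtained by expanding and cancelling the $\Vert \cb \Vert^2$ terms, for $j$ ranging over $I(S)\setminus\{0\}$ relative to a chosen base vertex $\x_0$) have a full-rank coefficient matrix and hence a unique solution $\cb$; setting $R := \Vert \x_0 - \cb \Vert$ then gives $\Vert \x_j - \cb \Vert = R$ for all $j \in I(S)$. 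This is the first displayed conclusion of the lemma and is essentially just linear algebra.

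The second part, $\Vert \x_j - \cb \Vert > R$ for $j \notin I(S)$, is where the Delaunay property enters, and this is the main point to get right. By Definition of the Delaunay triangulation, the open ball $B(\cb, R)$ contains no point of $\X$ other than the vertices of $S$ in its interior; hence for $j \notin I(S)$ we immediately get $\Vert \x_j - \cb \Vert \ge R$. The work is upgrading this to a \emph{strict} inequality. Suppose for contradiction that $\Vert \x_j - \cb \Vert = R$ for some $j \notin I(S)$. Then $\x_j$ together with the $d+1$ vertices of $S$ gives $d+2$ points all lying on the common hypersphere of center $\cb$ and radius $R$, contradicting the general position assumption (no $d+2$ points of $\X$ on a common hypersphere). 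Therefore the inequality is strict.

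One subtlety I would be careful about is the exact convention for ``contains in its interior'' in the Delaunay definition: the statement should be read as the open circumball being empty of other points of $\X$, which is standard, and this is exactly what yields $\Vert\x_j-\cb\Vert \ge R$ for non-vertices. If instead one only assumed the closed ball condition the strictness argument via general position still closes the gap, so the result is robust to that convention. I expect the main obstacle to be purely expository rather than mathematical: namely, being precise that a single $d$-simplex's vertices have a \emph{unique} circumscribing hypersphere (needing the $d$-dimensional affine hull part of general position) and that strictness for outside points needs the \emph{no $d+2$ cospherical points} part of general position — both halves of the general position hypothesis are used, and it is worth pointing this out explicitly. No lengthy computation is required; the proof is short once these two invocations of general position are cleanly separated.
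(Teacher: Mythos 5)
Your proof is correct and follows essentially the same route as the paper's: the first claim is immediate from the definition of the circumscribing hypersphere, and the strict inequality for $j \notin I(S)$ comes from combining the empty-circumball property of $\DT(\X)$ with the no-$(d+2)$-cospherical-points part of general position to rule out equality. Your write-up is simply more explicit than the paper's (which compresses both steps into one sentence), and the added detail on existence and uniqueness of $\cb$ and on separating the two uses of general position is sound.
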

\begin{proof}
    Note $\Vert \x_j - \cb \Vert = R$ for $j \in I(S)$ follows by definition of the circumscribing hypersphere. Any $d$-simplex of $\DT(\X)$ contains no other points of $\X$, which together with $\X$ being in general position implies that $\Vert \x_j - \cb \Vert > R$ for $j \notin I(S)$ as each of these $\x_j$ are strictly outside the circumscribing hypersphere. 
\end{proof}

\noindent We next recall a translation-invariance result for the locality regularizer under linear constraints. 

\begin{lemma}[Lemma 2 restated from \cite{tasissa2023k}]\label{lem:reg-relation}
    Let $\yt \in \CH(\X)$ and let $\w\in \Delta^n$ be such that $\X\w = \yt$. Let $\ell_{\yt}(\w) := \sum_{i=1}^n w_i \Vert \x_i - \yt \Vert^2$. Then for any $\y \in \R^d$, 
    \begin{equation*}
        \ell_{\yt}(\w) = \ell_\y(\w) - \Vert \y - \Tilde{\y} \Vert^2.
    \end{equation*}
\end{lemma}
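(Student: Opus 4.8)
The plan is to prove the identity by a direct expansion of the squared norms in $\ell_\y(\w)$ about the point $\yt$, using the two defining properties of a feasible $\w$: that $\w \in \Delta^n$ (hence $\sum_{i=1}^n w_i = 1$) and that $\X\w = \yt$. Concretely, for each index $i$ I would write $\x_i - \y = (\x_i - \yt) + (\yt - \y)$ and apply $\Vert a + b\Vert^2 = \Vert a\Vert^2 + 2\langle a, b\rangle + \Vert b\Vert^2$ with $a = \x_i - \yt$ and $b = \yt - \y$.

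Next I would multiply through by $w_i$ and sum over $i$, which splits $\ell_\y(\w)$ into three pieces. The first, $\sum_i w_i\Vert \x_i - \yt\Vert^2$, is exactly $\ell_{\yt}(\w)$ by definition. The last, $\big(\sum_i w_i\big)\Vert \yt - \y\Vert^2$, equals $\Vert \yt - \y\Vert^2$ since $\sum_i w_i = 1$. The middle cross term is $2\big\langle \sum_i w_i(\x_i - \yt),\, \yt - \y\big\rangle$, and here I would observe that $\sum_i w_i(\x_i - \yt) = \X\w - \big(\sum_i w_i\big)\yt = \yt - \yt = 0$, so this term drops out entirely. Rearranging the resulting equation $\ell_\y(\w) = \ell_{\yt}(\w) + \Vert \y - \yt\Vert^2$ yields the claim.

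There is no real obstacle: the computation is short and elementary. The only points worth flagging are that both constraints are genuinely used — $\sum_i w_i = 1$ controls the constant term, while $\X\w = \yt$ is what annihilates the cross term — and that the conclusion holds for every $\y \in \R^d$ with no further assumption (in particular $\y$ need not lie in $\CH(\X)$, and $\X$ need not be in general position). The "translation-invariance" interpretation is simply that, among feasible $\w$, the two locality functionals $\ell_\y$ and $\ell_{\yt}$ differ by the $\w$-independent constant $\Vert \y - \yt\Vert^2$, so minimizing one is equivalent to minimizing the other.
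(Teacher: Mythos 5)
Your proof is correct and takes essentially the same approach as the paper: a direct expansion of the squared norms using $\sum_i w_i = 1$ and $\X\w = \yt$. The only cosmetic difference is that you expand $\ell_\y(\w)$ about $\yt$ (so the cross term vanishes outright), whereas the paper expands $\ell_{\yt}(\w)$ about $\y$ (so the cross term contributes $-2\Vert\y-\yt\Vert^2$ and partially cancels the constant term); both land on the same identity.
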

\begin{proof}We compute as follows, noting that $\X \w=\tilde{\y}$ in the fifth line:
    \begin{align*}
        \ell_{\Tilde{\y}}(\w) &= \sum_{i=1}^n w_i \Vert \x_i - \Tilde{\y} \Vert^2 \\
        &= \sum_{i=1}^n w_i \Vert \x_i - \y + \y - \Tilde{\y} \Vert^2 \\
        &=  \sum_{i=1}^n w_i \Vert \x_i - \y \Vert^2 + \sum_{i=1}^n w_i \Vert  \y - \Tilde{\y} \Vert^2 + 2 \sum_{i=1}^n w_i \langle \x_i - \y, \y - \Tilde{\y} \rangle  \\
        &= \ell_\y(\w) + \Vert \y - \Tilde{\y} \Vert^2 +2 \left\langle \sum_{i=1}^n w_i \x_i - \y, \y - \Tilde{\y} \right\rangle  \\
        &= \ell_\y(\w) + \Vert \y - \Tilde{\y} \Vert^2 +2 \left\langle \Tilde{\y} - \y, \y - \Tilde{\y} \right\rangle \\
        &= \ell_\y(\w) + \Vert \y - \Tilde{\y} \Vert^2 - 2 \Vert \y - \Tilde{\y} \Vert^2\\
        &= \ell_\y(\w) - \Vert \y - \Tilde{\y} \Vert^2.
    \end{align*}
\end{proof}

The following result asserts structural properties of solutions to the exact problem \eqref{eqn:exact}.  It slightly generalizes a result of \cite{tasissa2023k} by considering $\y\in\CH(\X)$ that lie on the boundary of $d$-simplices of $\DT(\X)$, not just in the interior.

\begin{theorem}[Generalization of Theorem 2 from \cite{tasissa2023k}]\label{thm:TSP_2023_E}
   Let $\X$ be in general position.  Let $\y \in \CH(\X)$.  Then solving \eqref{eqn:exact} yields a solution $\w^*$ that is $(d+1)$-sparse. In particular, these nonzero entries correspond with the $k$-face, belonging to at least one $d$-simplex of $\DT(\X)$, containing $\y$ where $k \leq d$. 
\end{theorem}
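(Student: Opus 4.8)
The plan is to exploit the exact equality constraint $\X\w = \y$ to rewrite the locality objective using Lemma \ref{lem:reg-relation}, and then reduce the problem to a statement about circumscribing hyperspheres via Lemma \ref{lem:circumdist}. First I would locate the $k$-face of $\DT(\X)$ containing $\y$: since $\y \in \CH(\X)$ and $\DT(\X)$ is a triangulation, $\y$ lies in some $d$-simplex, and more precisely in the relative interior of a unique $k$-face $F$ (for some $k \le d$) of at least one $d$-simplex $S$ of $\DT(\X)$; let $S \supseteq F$ be such a $d$-simplex. Because any feasible $\w \in \Delta^n$ with $\X\w = \y$ is a convex combination of columns of $\X$ equal to $\y$, I want to argue the optimal $\w^*$ must be supported on the vertices of $F$.

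The key step is the following: for any feasible $\w$, apply Lemma \ref{lem:reg-relation} with $\yt = \y$ (which is legal since $\X\w = \y \in \CH(\X)$) to get $\ell_\y(\w) = \sum_i w_i \|\x_i - \y\|^2$ directly — more usefully, I would instead translate the problem so as to compare $\ell_\y(\w)$ across feasible points. The cleaner route: fix the $d$-simplex $S$ with vertex index set $I(S)$ and let $\cb, R$ be its circumradius data. For any $\x_i$ write $\|\x_i - \y\|^2 = \|\x_i - \cb\|^2 + 2\langle \x_i - \cb, \cb - \y\rangle + \|\cb - \y\|^2$. Summing against a feasible $\w$ and using $\sum w_i \x_i = \y$, the cross term becomes $2\langle \y - \cb, \cb - \y\rangle = -2\|\cb-\y\|^2$, so $\ell_\y(\w) = \sum_i w_i \|\x_i - \cb\|^2 - \|\cb - \y\|^2$. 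The second term is constant over feasible $\w$, so minimizing $\ell_\y$ is equivalent to minimizing $\sum_i w_i \|\x_i - \cb\|^2$. By Lemma \ref{lem:circumdist}, $\|\x_i - \cb\|^2 = R^2$ for $i \in I(S)$ and $\|\x_i - \cb\|^2 > R^2$ for $i \notin I(S)$, so $\sum_i w_i \|\x_i - \cb\|^2 \ge R^2$ with equality iff $\w$ is supported on $I(S)$. Since $\y \in \CH(S)$ there exists a feasible $\w$ supported on $I(S)$ achieving the bound, so every optimizer is supported on the vertices of $S$.

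It remains to sharpen ``supported on vertices of $S$'' to ``supported on vertices of the $k$-face $F$ containing $\y$.'' This follows because if $\w^*$ is supported on $I(S)$ with $\X\w^* = \y$, then $\y$ is a convex combination of the vertices of $S$; writing $\y$ in barycentric coordinates with respect to $S$, those coordinates are unique (the vertices of a $d$-simplex are affinely independent), and the ones that are nonzero are exactly those indexing the minimal face $F$ whose relative interior contains $\y$. Hence $\operatorname{supp}(\w^*)$ equals the vertex set of $F$, giving $\|\w^*\|_0 = k+1 \le d+1$. I should also note the support is the same regardless of which containing $d$-simplex $S$ is chosen, since $F$ is a face of every $d$-simplex containing $\y$ and barycentric coordinates restricted to $F$ agree — this is what makes the statement ``belonging to at least one $d$-simplex of $\DT(\X)$'' well posed.

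The main obstacle, and the place requiring care, is the boundary case that distinguishes this from the original result in \cite{tasissa2023k}: when $\y$ lies on a shared face of several $d$-simplices, one must confirm (i) that the argument above is independent of the choice of $S$, and (ii) that the uniqueness of barycentric coordinates still pins the support down to $F$ rather than allowing a larger support spread across two adjacent simplices — this is handled by the affine independence of the $d+1$ vertices of any single $S$, which forces any feasible $\w$ supported within $I(S)$ to be the unique barycentric representation. A minor additional check is that a $d$-simplex $S$ of $\DT(\X)$ with $\y \in \CH(S)$ actually exists, i.e. that the $d$-simplices of the triangulation cover $\CH(\X)$, which is immediate from the definition of triangulation given in Section \ref{sec:background}.
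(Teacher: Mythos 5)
Your proof is correct and follows essentially the same route as the paper's: both rewrite $\ell_\y(\w)$ via the circumcenter identity (Lemma \ref{lem:reg-relation}) and invoke Lemma \ref{lem:circumdist} to conclude that any feasible $\w$ placing mass outside a containing Delaunay simplex is strictly suboptimal. The only cosmetic difference is how the support is pinned down to the $k$-face $F$: you use uniqueness of barycentric coordinates within one containing simplex, whereas the paper runs the strict-inequality argument against each of the $N$ containing simplices and takes the intersection of their vertex sets.
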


\begin{proof}
    Let $S_1, S_2, \ldots, S_N$ denote the $d$-simplices of $\DT(\X)$ that include $\y$.  Let $F = \bigcap_{i=1}^N S_i$ denote the vertices of the $k$-face containing $\y$ common to these $N$ simplices (when $N=1$ the face containing $\y$ is the $d$-simplex itself). Let $\w^* \in \Delta^n$ denote the unique representation of $\y$ supported on $F$. Let $I(S_i)$ denote the indices in $\X$ of the elements of $S_i$ for each $i=1,\ldots, N$. Let $R_i$ denote the radius of the hypersphere circumscribing $S_i$ and let $\cb_i$ denote the center of theses hyperspheres. 
    
    For each $i$ suppose that $\w^i$ is another feasible solution supported on vertices $S_i'$ with indices $I(S_i')$. Suppose that $I(S_i') \cap I(S_i)^C \neq \emptyset$ so that $S_i'$ contains at least one vertex not in $S_i$. Noting that $\X\w^i=\y$ and using Lemma \ref{lem:reg-relation} we can write 
    \begin{align*}
     \ell_\y(\w^i) &= \sum_{j \in I(S_i')} w^i_j \Vert \y - \x_j \Vert^2\\
       &= \sum_{j \in I(S_i')} w^i_j \Vert \cb_i - \x_j \Vert^2 - \Vert \y - \cb_i \Vert^2.\end{align*}
       Using Lemma \ref{lem:circumdist} and in particular the fact that there exists at least one index $\alpha_i$ of $I(S_i')$ not in $I(S_i)$ so that $\Vert \cb_i - \x_{\alpha_i} \Vert > R_i$, it follows that
   \begin{align*}\ell_{\y}(\w^i)&> \sum_{j \in I(S_i')} w^i_j R_i^2 - \Vert \y - \cb_i \Vert^2 \\
       &= R_i^2 - \Vert \y - \cb_i \Vert^2\\
       &=\sum_{j \in I(S_i)} w^*_j \Vert \cb_i - \x_j \Vert^2 - \Vert \y - \cb_i \Vert^2\\
       &= \sum_{j \in I(S_i)}w^*_j \Vert \y - \x_j \Vert^2\\
       &= \sum_{j \in I(F)}w^*_j \Vert \y - \x_j \Vert^2\\
       &= \ell_\y(\w^*).
    \end{align*}
  Since this is true for all $i=1,\ldots, N$, any feasible solution not supported strictly on $F$ is suboptimal. 
\end{proof}

\begin{remark}
We note that if $\y$ lies in the interior of a $d$-simplex of $\DT(\X)$, then $N$=1 in the above proof.  
Otherwise $\y$ either lies on the interior of a $(d-1)$-face intersecting the boundary of $\CH(\X)$ and $N=1$, or $\y$ lies in a $k$-face with $k<d$ forming the intersection between $N\geq 2$ faces.
\end{remark}

\subsection[Analysis of Solutions to (R)]{Analysis of Solutions to \eqref{eqn:relaxed}}

Our aim is to prove a version of Theorem \ref{thm:TSP_2023_E} for the relaxed problem (\ref{eqn:relaxed}) which will depend on $\rho$.  We first establish a few technical preliminary results before proving Theorem \ref{thm:main_R}.  

In Lemma \ref{lem:rho_bound}, we establish that solutions to (\ref{eqn:relaxed}) must reconstruct $\y$ well when $\rho$ is small.
\begin{lemma}\label{lem:rho_bound}
    Fix $\rho > 0$.  Let $\y$ lie in a $d$-simplex of $\DT(\X)$ and let $\w_\rho$ denote a solution to \eqref{eqn:relaxed}.  Then $\Vert \X \w_\rho - \y\Vert^2 \leq \rho C,$ where $C$ is a constant depending on $\X$ and $\y$. 
\end{lemma}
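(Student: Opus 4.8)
The plan is to use optimality of $\w_\rho$ by comparing against a convenient feasible competitor. Since $\y$ lies in a $d$-simplex $S$ of $\DT(\X)$, there is a vector $\w_\star \in \Delta^n$ supported on the vertices of $S$ with $\X\w_\star = \y$; this is feasible for \eqref{eqn:relaxed}. Evaluating the objective at $\w_\star$ gives $\tfrac{1}{2}\Vert \X\w_\star - \y\Vert^2 + \rho\, \ell_\y(\w_\star) = \rho\, \ell_\y(\w_\star)$, because the reconstruction term vanishes. Since $\w_\rho$ is a minimizer, its objective value is no larger, so
\[
\tfrac{1}{2}\Vert \X\w_\rho - \y\Vert^2 + \rho\, \ell_\y(\w_\rho) \;\leq\; \rho\, \ell_\y(\w_\star).
\]
The locality term $\ell_\y(\w_\rho) = \sum_i (\w_\rho)_i \Vert \x_i - \y\Vert^2$ is nonnegative (each $(\w_\rho)_i \geq 0$), so it can be dropped from the left side, yielding $\tfrac{1}{2}\Vert \X\w_\rho - \y\Vert^2 \leq \rho\, \ell_\y(\w_\star)$, i.e. $\Vert \X\w_\rho - \y\Vert^2 \leq \rho\, C$ with $C := 2\,\ell_\y(\w_\star)$.

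It remains to check that $C$ depends only on $\X$ and $\y$, which is immediate: $\w_\star$ is determined by $\X$ and $\y$ (the barycentric coordinates of $\y$ in the containing simplex), and hence $\ell_\y(\w_\star) = \sum_{j \in I(S)} (\w_\star)_j \Vert \x_j - \y\Vert^2$ is a fixed finite quantity. One could even give the cruder bound $C \leq 2 \max_i \Vert \x_i - \y\Vert^2$ using $\sum_i (\w_\star)_i = 1$, making the dependence fully explicit without reference to the triangulation.

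There is essentially no obstacle here; the only mild subtlety is ensuring the competitor $\w_\star$ is genuinely feasible, which requires $\y \in \CH(S) \subseteq \CH(\X)$ — guaranteed by the hypothesis that $\y$ lies in a $d$-simplex of $\DT(\X)$ — and that such a barycentric representation exists and lies in $\Delta^n$, which holds since the vertices of a $d$-simplex are affinely independent (as $\X$ is in general position). If one wants $\w_\star$ with minimal-support or a canonical choice, one can simply take the barycentric coordinates; uniqueness is not needed for this lemma, only existence of one feasible point with zero reconstruction error.
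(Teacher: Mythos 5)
Your proof is correct and follows essentially the same route as the paper: compare the objective value of the minimizer $\w_\rho$ against a feasible competitor with zero reconstruction error and read off the bound. The only cosmetic differences are that the paper uses the optimizer of \eqref{eqn:exact} as the competitor and retains the locality term of $\w_\rho$ on the left (yielding $C=\max_i\Vert\x_i-\y\Vert^2-\min_i\Vert\x_i-\y\Vert^2$), whereas you drop that nonnegative term and get $C=2\,\ell_\y(\w_\star)\le 2\max_i\Vert\x_i-\y\Vert^2$; both constants depend only on $\X$ and $\y$, as required.
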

\begin{proof}
    First, let $\w_e$ denote the solution to \eqref{eqn:exact}. We can obtain a bound on $\Vert \X \w_\rho - \y\Vert^2$ by noting that $\w_e$ is a suboptimal solution to \eqref{eqn:relaxed} but satisfies $\Vert\X \w_e - \y\Vert^{2} = 0$:
    \begin{align*}
        \Vert \X \w_\rho -\y \Vert^2 + \rho \sum_{i=1}^n (w_\rho)_i \Vert \x_i - \y\Vert^2 &\leq \rho \sum_{i=1}^n (w_e)_i \Vert \x_i - \y\Vert^2\\
        \Rightarrow\Vert \X \w_\rho - \y \Vert^2 &\leq \rho \sum_{i=1}^n (\w_e - \w_\rho)_i \Vert \x_i - \y\Vert^2 \\
        &\leq \rho (\max_{i} \Vert \x_i - \y\Vert^2 - \min_{i} \Vert \x_i - \y\Vert^2),
    \end{align*}and the result follows taking $C:=\displaystyle\max_{i} \Vert \x_i - \y\Vert^2 - \min_{i} \Vert \x_i - \y\Vert^2$.
\end{proof}

Next, we combine Theorem \ref{thm:TSP_2023_E} with Lemma \ref{lem:reg-relation}.

\begin{lemma}\label{lem:simplex_support}
    Let $\X$ lie in general position. 
 Let $\yt$ lie in the interior of a $d$-simplex of $\DT(\X)$. Let $\w_e$ be chosen as the solution of $\eqref{eqn:exact}$ so that $\yt = \X \w_e$. Let $\w \in \Delta^n$ be chosen so that $\X \w = \yt$ and there exists an $i$ such that $[\w_e]_i = 0$ and $\w_i > 0$ (i.e., $\w$ has a nonzero entry coinciding with a point in $\X$ that is not a vertex of the $d$-simplex of $\DT(\X)$ containing $\yt$). Then for any other $\y \in \R^d$, $\ell_\y(\w_e) < \ell_\y(\w).$
\end{lemma}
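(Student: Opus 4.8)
The plan is to reduce the claim, by means of the translation-invariance identity of Lemma \ref{lem:reg-relation}, to a comparison made against the reference point $\yt$ itself, and then to reuse the circumscribing-hypersphere argument from the proof of Theorem \ref{thm:TSP_2023_E}. First I would note that both $\w_e$ and $\w$ are feasible for the exact problem at $\yt$, i.e. $\w_e,\w \in \Delta^n$ with $\X\w_e = \X\w = \yt$. Hence Lemma \ref{lem:reg-relation} applies to each, giving $\ell_\y(\w_e) = \ell_{\yt}(\w_e) + \Vert \y - \yt\Vert^2$ and $\ell_\y(\w) = \ell_{\yt}(\w) + \Vert \y - \yt\Vert^2$. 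Subtracting, the term $\Vert\y - \yt\Vert^2$ cancels, so the desired inequality $\ell_\y(\w_e) < \ell_\y(\w)$ is equivalent to $\ell_{\yt}(\w_e) < \ell_{\yt}(\w)$, which no longer involves $\y$.

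Next I would use the hypothesis that $\yt$ lies in the interior of a $d$-simplex $S$ of $\DT(\X)$. Write $I(S)$ for the indices of its vertices, and let $R$ and $\cb$ be the radius and center of its circumscribing hypersphere. By Theorem \ref{thm:TSP_2023_E} (the case $N=1$), $\w_e$ is precisely the unique element of $\Delta^n$ supported on $I(S)$ with $\X\w_e = \yt$. Applying Lemma \ref{lem:reg-relation} once more, this time with the reference point $\cb$ in place of $\y$, to each of $\w_e$ and $\w$ (both of which reconstruct $\yt$), gives
\begin{equation*}
\ell_{\yt}(\w_e) = \sum_{j} (\w_e)_j \Vert \x_j - \cb\Vert^2 - \Vert \yt - \cb\Vert^2 = R^2 - \Vert\yt-\cb\Vert^2,
\end{equation*}
where the last equality holds because $(\w_e)_j$ is supported on $I(S)$, on which $\Vert\x_j - \cb\Vert = R$ by Lemma \ref{lem:circumdist}. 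Similarly $\ell_{\yt}(\w) = \sum_j w_j \Vert\x_j - \cb\Vert^2 - \Vert\yt - \cb\Vert^2$. By Lemma \ref{lem:circumdist}, $\Vert\x_j - \cb\Vert \geq R$ for every column index $j$, with strict inequality whenever $j \notin I(S)$. By hypothesis there is an index $i$ with $w_i > 0$ and $(\w_e)_i = 0$, hence $i \notin I(S)$, so
\begin{equation*}
\sum_j w_j \Vert\x_j - \cb\Vert^2 \;>\; \sum_j w_j R^2 \;=\; R^2,
\end{equation*}
using $\w \in \Delta^n$. Therefore $\ell_{\yt}(\w) > R^2 - \Vert\yt - \cb\Vert^2 = \ell_{\yt}(\w_e)$, which completes the reduction begun in the first paragraph.

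The only real subtlety, and the step I would be most careful with, is that $\w$ need not be $(d+1)$-sparse: it may spread mass over several points outside $S$ and several inside. But the argument is insensitive to this, since $\Vert\x_j - \cb\Vert \geq R$ holds for every column and is strict at at least one index $i$ carrying positive weight $w_i > 0$; the convexity bookkeeping $\sum_j w_j = 1$ converts these pointwise bounds into the strict inequality $\sum_j w_j \Vert\x_j - \cb\Vert^2 > R^2$. No assumption on $\X$ beyond general position (already needed for uniqueness of $\DT(\X)$ and for the strict inequality in Lemma \ref{lem:circumdist}) enters, and the interior hypothesis on $\yt$ is used only to guarantee that the $k$-face of Theorem \ref{thm:TSP_2023_E} is the full $d$-simplex $S$, so that $\w_e$ sits on the circumscribing sphere of a single simplex.
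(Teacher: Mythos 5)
Your proof is correct and follows essentially the same route as the paper: reduce to the comparison at $\yt$ via the translation identity of Lemma \ref{lem:reg-relation}, then invoke the strict optimality of the simplex-supported weights at $\yt$. The only difference is cosmetic---the paper simply cites Theorem \ref{thm:TSP_2023_E} for the inequality $\ell_{\yt}(\w_e) < \ell_{\yt}(\w)$, whereas you re-derive it by repeating the circumscribing-hypersphere computation from that theorem's proof; your closing observation that $(\w_e)_i = 0$ forces $i \notin I(S)$ only because $\yt$ is interior to the simplex is a worthwhile point the paper leaves implicit.
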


\begin{proof}
    From Theorem \ref{thm:TSP_2023_E}, we have that $\ell_{\yt}(\w_e) < \ell_{\yt}(\w)$.
      Now, using Lemma \ref{lem:reg-relation} we have that
    \begin{align*}
        \ell_{\y}(\w_e)  - \Vert \y - \Tilde{\y} \Vert^2 &< \ell_{\y}(\w)  - \Vert \y - \Tilde{\y} \Vert^2 \\
        \Rightarrow \ell_{\y}(\w_e) &< \ell_{\y}(\w).
    \end{align*}
\end{proof}





Finally, we state and prove an analogue to Theorem \ref{thm:TSP_2023_E} for the relaxed problem \eqref{eqn:relaxed}.

\begin{theorem}\label{thm:main_R}
    Let $\X$ lie in general position. Let $\y$ lie in $\CH(\X)$ and in the interior of a $d$-simplex of $\DT(\X)$ with vertices $S$. Let $\rho < \frac{d_{S\y}}{C}$, where $d_{S\y} = \min_{\z \in \partial S} {\Vert \z - \y \Vert^2}$ and $C$ is as in Lemma \ref{lem:rho_bound}. Then, solving \eqref{eqn:relaxed} yields a solution $\w_\rho$ that is $(d+1)$-sparse and whose nonzero entries correspond to $S$.
\end{theorem}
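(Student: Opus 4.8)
The plan is to argue by contradiction: I will show that any solution $\w_\rho$ of \eqref{eqn:relaxed} whose support is not contained in the vertex set $S$ can be strictly improved. The engine is Lemma~\ref{lem:simplex_support}, which applies to points lying in the \emph{interior} of a $d$-simplex of $\DT(\X)$; the role of the hypothesis $\rho < d_{S\y}/C$ is precisely to force the reconstruction $\X\w_\rho$ to be such an interior point. (If $C=0$ then Lemma~\ref{lem:rho_bound} forces $\X\w_\rho=\y$ exactly and the conclusion follows directly from Theorem~\ref{thm:TSP_2023_E}, so assume $C>0$.)

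First I would set $\yt := \X\w_\rho$ and invoke Lemma~\ref{lem:rho_bound} to get $\Vert \yt - \y\Vert^2 \le \rho C < d_{S\y}$. Since $\CH(S)$ is convex and $\y$ lies in its interior, the distance from $\y$ to $\partial S$ equals the distance from $\y$ to $\R^d \setminus \CH(S)$, and $d_{S\y}$ is the square of this distance; hence the open ball of radius $\sqrt{d_{S\y}}$ about $\y$ is contained in the interior of $\CH(S)$. As $\Vert \yt - \y\Vert < \sqrt{d_{S\y}}$, the point $\yt$ lies in the interior of the $d$-simplex $\CH(S)$.

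Next I would introduce $\w_e$, the unique element of $\Delta^n$ supported on the vertices $S$ with $\X\w_e = \yt$ (unique because the $d+1$ vertices of a $d$-simplex are affinely independent). Applying Theorem~\ref{thm:TSP_2023_E} to $\yt$ — which, being interior, falls in the case $N=1$, $k=d$ — identifies $\w_e$ as exactly the solution of \eqref{eqn:exact} with target $\yt$. Now suppose, toward a contradiction, that the support of $\w_\rho$ is not contained in $S$, i.e. some coordinate $i$ has $[\w_\rho]_i > 0$ while $[\w_e]_i = 0$. Then Lemma~\ref{lem:simplex_support}, applied with this $\yt$, with $\w_\rho$ in the role of $\w$, and with the given $\y$, yields $\ell_\y(\w_e) < \ell_\y(\w_\rho)$.

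Finally I would compare objective values in \eqref{eqn:relaxed}. Since $\X\w_e = \yt = \X\w_\rho$, the least-squares terms $\tfrac12\Vert\X\w - \y\Vert^2$ agree for $\w=\w_e$ and $\w=\w_\rho$, so the strict inequality on the locality terms gives a strictly smaller \eqref{eqn:relaxed}-objective at $\w_e$, contradicting optimality of $\w_\rho$. Hence the support of $\w_\rho$ lies in $S$ (so in fact $\w_\rho = \w_e$), which is the asserted $(d+1)$-sparsity with nonzero entries among the vertices of $S$. The only genuinely delicate step is the second one — certifying that $\X\w_\rho$ lands inside the \emph{correct} simplex and in its interior — and it is exactly there that both the quantitative estimate of Lemma~\ref{lem:rho_bound} and the interiority hypothesis on $\y$ are used; the remaining steps are bookkeeping with the already-established lemmas.
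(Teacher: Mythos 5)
Your proposal is correct and follows essentially the same route as the paper's proof: bound $\Vert\X\w_\rho-\y\Vert^2$ via Lemma~\ref{lem:rho_bound} to place $\yt=\X\w_\rho$ inside $\CH(S)$, identify the $S$-supported representation $\w_e$ of $\yt$ via Theorem~\ref{thm:TSP_2023_E}, and derive a contradiction from Lemma~\ref{lem:simplex_support} since the least-squares terms coincide. Your write-up is in fact slightly more careful than the paper's on one point it leaves implicit — verifying that $\yt$ lands in the \emph{interior} of $\CH(S)$, which is needed for Lemma~\ref{lem:simplex_support} to apply — and on the degenerate case $C=0$.
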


\begin{proof}
    By Lemma \ref{lem:rho_bound}, we have $\Vert \X \w_\rho - \y \Vert^2 \leq \rho C$ where $C$ depends only on $\X$ and $\y$.  Because $\y$ is contained in $\CH(S)$, it follows that for $\rho < \frac{d_{Sy}}{C}$, $\X \w_\rho $ is contained within $\CH(S)$. Suppose, towards a contradiction, that $\w_\rho$ has support not strictly on $I(S)$, the indices of $S$ in $\X$. 
    Since $\X \w_\rho$ is contained within the Delaunay simplex $\CH(S)$, Theorem \ref{thm:TSP_2023_E} tells us that there exists weights $\w_e$ that are supported only on $S$ and that $ \yt:= \X \w_\rho = \X \w_e$. Thus, $\frac{1}{2}\Vert \X \w_\rho - \y \Vert^2 = \frac{1}{2}\Vert \X \w_e - \y \Vert^2$. Moreover, as $\w_e$ solves $\eqref{eqn:exact}$ for the point $\yt$, we have that $\ell_{\yt}(\w_e) < \ell_{\yt}(\w_\rho)$. Lemma \ref{lem:simplex_support} then allows us to conclude that $\ell_\y(\w_e) < \ell_\y(\w_\rho)$. Now, comparing $\w_e$ to $\w_\rho$ in the objective of \eqref{eqn:relaxed} we have that:
\begin{align*}
        \frac{1}{2}\Vert \X\w_e - \y\Vert^2 + \rho  \sum_{i=1}^n [w_e]_i \Vert \x_i - \y \Vert^2 &< \frac{1}{2}\Vert \X\w_\rho - \y\Vert^2 + \rho  \sum_{i=1}^n [w_\rho]_i \Vert \x_i - \y \Vert^2
    \end{align*}
    which contradicts the assumption that $\w_\rho$ solves $\eqref{eqn:relaxed}$. 
 Thus, the solution must be supported strictly on $S$.
\end{proof}

\begin{remark}
We note that an important observation in Theorem \ref{thm:main_R} is that for  $\X$ in general position and for arbitrary $\y\in\mathbb{R}^{d}$ and $\rho > 0$, the solution $\w_\rho$ to (\ref{eqn:relaxed}) is identical to the solution to \eqref{eqn:exact} with $\X \w_\rho$ as the point to be represented.  This holds for arbitrary $\rho>0$ and $\y$, but $\X\w_{\rho}$ is only close to $\y$ when $\rho$ is small and $\y\in\CH(\X)$.
\end{remark}



\subsubsection[Reconstruction Outside CH(X)]{Reconstruction Outside $\CH(\X)$}
We turn our attention to the case where $\y \not \in \CH(\X)$. 
\begin{lemma}\label{lem:OutsideCH}
    Let $\w_\rho$ be the solution to \eqref{eqn:relaxed} and $\wpr=\displaystyle\argmin_{\w \in \Delta^n} \Vert \X\w - \y\Vert^{2}$, the weights that form the projection of $\y$ onto $\CH(\X)$. Then 
    \begin{equation*}
       \Vert \X \wpr - \y \Vert^2 \leq \Vert \X \w_\rho - \y \Vert^2 \leq \Vert \X \wpr - \y \Vert^2 + \rho C,
    \end{equation*}with $C$ as in Lemma \ref{lem:rho_bound}.
\end{lemma}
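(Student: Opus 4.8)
The plan is to establish the two inequalities independently; both are short. The left-hand inequality is purely definitional: $\w_\rho\in\Delta^n$ is a feasible point for the projection problem $\min_{\w\in\Delta^n}\Vert\X\w-\y\Vert^2$ whose minimizer is $\wpr$, so $\Vert\X\wpr-\y\Vert^2\le\Vert\X\w_\rho-\y\Vert^2$. Nothing about \eqref{eqn:relaxed} is needed for this half.

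For the right-hand inequality I would reuse the suboptimality trick from the proof of Lemma \ref{lem:rho_bound}, with $\wpr$ taking the role played there by $\w_e$; the only difference is that $\wpr$ does not in general drive the reconstruction error to zero, so its reconstruction term survives. Concretely, since $\w_\rho$ solves \eqref{eqn:relaxed} and $\wpr\in\Delta^n$ is feasible but generally suboptimal, comparing the objective of \eqref{eqn:relaxed} at the two points gives
\begin{align*}
\Vert\X\w_\rho-\y\Vert^2+\rho\sum_{i=1}^n[\w_\rho]_i\Vert\x_i-\y\Vert^2 \le \Vert\X\wpr-\y\Vert^2+\rho\sum_{i=1}^n[\wpr]_i\Vert\x_i-\y\Vert^2,
\end{align*}
hence
\begin{align*}
\Vert\X\w_\rho-\y\Vert^2 \le \Vert\X\wpr-\y\Vert^2+\rho\sum_{i=1}^n\big([\wpr]_i-[\w_\rho]_i\big)\Vert\x_i-\y\Vert^2.
\end{align*}

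To finish I would bound the last sum exactly as in Lemma \ref{lem:rho_bound}: since $\wpr$ and $\w_\rho$ both lie in $\Delta^n$, we have $\sum_i[\wpr]_i\Vert\x_i-\y\Vert^2\le\max_i\Vert\x_i-\y\Vert^2$ and $\sum_i[\w_\rho]_i\Vert\x_i-\y\Vert^2\ge\min_i\Vert\x_i-\y\Vert^2$, so the difference is at most $C=\max_i\Vert\x_i-\y\Vert^2-\min_i\Vert\x_i-\y\Vert^2$, which is precisely the constant from Lemma \ref{lem:rho_bound}. This gives the claimed upper bound. I do not anticipate any real obstacle: the argument is a direct adaptation of Lemma \ref{lem:rho_bound}, and the only point worth flagging is that it is exactly the (possibly) nonzero value of $\Vert\X\wpr-\y\Vert^2$, which occurs when $\y\notin\CH(\X)$, that forces this term to appear on the right — this is what turns the statement into "$\X\w_\rho$ converges linearly in $\rho$ to the projection of $\y$ onto $\CH(\X)$" rather than to $\y$ itself.
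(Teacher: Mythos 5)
Your proposal is correct and follows exactly the paper's argument: the lower bound is definitional since $\w_\rho$ is feasible for the projection problem, and the upper bound repeats the suboptimality comparison from Lemma \ref{lem:rho_bound} with $\wpr$ in place of $\w_e$, the only change being that the term $\Vert\X\wpr-\y\Vert^2$ no longer vanishes. Nothing to add.
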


\begin{proof}
   The lower bound come from the definition of $\w_{proj}$. The upper bound is the same argument as Lemma \ref{lem:rho_bound} with the same constant $C$, but now the reconstruction error term of the projection does not vanish as $\X \w_{proj} \neq \y$.
\end{proof}

\begin{lemma}\label{lem:out_bound}
    For any $\rho > 0$, $\Vert \X \wpr - \X \w_\rho \Vert^2 \leq \rho C$ with $C$ as in Lemma \ref{lem:rho_bound}.
\end{lemma}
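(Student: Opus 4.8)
The plan is to leverage the upper bound already established in Lemma \ref{lem:OutsideCH} together with the standard variational characterization of the Euclidean projection onto a convex set. Write $\x^{*} := \X\wpr$ for the nearest point of $\CH(\X)$ to $\y$; this point is unique since $\CH(\X)$ is compact and convex, even if $\wpr$ itself is not. First I would record the first-order optimality (obtuse-angle) condition for the projection: for every $\z \in \CH(\X)$ one has $\langle \y - \x^{*},\, \z - \x^{*} \rangle \leq 0$.

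Next I would apply this with the admissible choice $\z = \X\w_\rho \in \CH(\X)$ and expand the squared distance
\[
\Vert \X\w_\rho - \y \Vert^2 = \Vert \X\w_\rho - \x^{*} \Vert^2 + \Vert \x^{*} - \y \Vert^2 + 2\langle \X\w_\rho - \x^{*},\, \x^{*} - \y \rangle.
\]
Since the cross term equals $-2\langle \X\w_\rho - \x^{*},\, \y - \x^{*} \rangle$, it is nonnegative by the projection inequality, yielding the Pythagoras-type lower bound
\[
\Vert \X\w_\rho - \y \Vert^2 \;\geq\; \Vert \X\w_\rho - \X\wpr \Vert^2 + \Vert \X\wpr - \y \Vert^2 .
\]

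Finally I would combine this with the upper bound $\Vert \X\w_\rho - \y \Vert^2 \leq \Vert \X\wpr - \y \Vert^2 + \rho C$ from Lemma \ref{lem:OutsideCH}; subtracting the common term $\Vert \X\wpr - \y \Vert^2$ from both sides leaves exactly $\Vert \X\wpr - \X\w_\rho \Vert^2 \leq \rho C$, as claimed. Note the argument works verbatim for any minimizer $\w_\rho$, so uniqueness of the solution is not needed.

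I expect the only non-routine point to be the justification of the projection inequality, i.e., that $\CH(\X)$ being closed and convex makes the nearest-point map single-valued and supplies the obtuse-angle characterization. This is where convexity of the feasible set (rather than any property of the locality regularizer) is used; everything else is algebraic bookkeeping layered on top of Lemma \ref{lem:OutsideCH}.
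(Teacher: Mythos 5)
Your proposal is correct and follows essentially the same route as the paper's proof: the obtuse-angle characterization of the projection onto $\CH(\X)$, the resulting Pythagoras-type lower bound on $\Vert \X\w_\rho - \y\Vert^2$, and subtraction against the upper bound from Lemma \ref{lem:OutsideCH}. The remarks on uniqueness of the projected point and non-uniqueness of $\wpr$ are a nice touch but not a departure from the paper's argument.
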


\begin{proof}
We recall \cite{beck2014introduction} that the projection $\X\w_{proj}$ of a point $\y$ onto the convex hull $\CH(\X)$ satisfies the condition 
\begin{equation*}
\langle \y - \X \w_{proj}, \textbf{v} - \X \w_{proj}\rangle \le 0, \text{ for all } \textbf{v}\in\CH(\X).
\end{equation*}
Now, we compute 
\begin{align*}
&\Vert\X\w_{\rho}-\y\Vert^{2}\\
=&\Vert\X\w_{\rho}-\X\w_{proj}\Vert^{2}+\Vert\X\w_{proj}-\y\Vert^{2}+2\langle \X\w_{\rho}-\X\w_{proj}, \X\w_{proj}-\y\rangle\\
\ge& \Vert\X\w_{\rho}-\X\w_{proj}\Vert^{2}+\Vert\X\w_{proj}-\y\Vert^{2}.
\end{align*}
This implies by Lemma \ref{lem:OutsideCH} that $\Vert\X\w_{\rho}-\X\w_{proj}\Vert^{2}\le \Vert\X\w_{\rho}-\y\Vert^{2}-\Vert\X\w_{proj}-\y\Vert^{2}\le\rho C$.

\end{proof}

\begin{proposition}
    Let $\X$ lie in general position. Let $\y \not \in \CH(\X)$ be such that its projection $\y_{proj}$ is on the interior of the $(d-1)$-face of the $d$-simplex containing $\y_{proj}$. Then for small enough $\rho$ the solution $\w_\rho$ is $(d+1)$-sparse and has nonzero entries corresponding to the vertices of the $d$-simplex in $\DT(\X)$ containing $\y_{proj}$.
\end{proposition}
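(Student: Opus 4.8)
The plan is to reduce this to the interior case (Theorem \ref{thm:main_R}) by showing that for small $\rho$, the reconstruction $\X\w_\rho$ lands in the interior of one of the $d$-simplices adjacent to the $(d-1)$-face containing $\y_{proj}$, at which point the argument of Theorem \ref{thm:main_R} applies verbatim. Let $S$ denote the vertices of the $d$-simplex of $\DT(\X)$ containing $\y_{proj}$; since $\y_{proj}$ lies on the interior of a $(d-1)$-face $F$ of $\CH(\X)$ (a boundary face, so $F$ belongs to a unique $d$-simplex $S$ of $\DT(\X)$), we have $\y_{proj}\in\CH(S)$. The key geometric observation is that $\y_{proj}$, lying on the interior of the face $F$, has positive distance to the relative boundary $\partial F$ within $F$, and also to the other faces of $S$; combined with the fact that $\y - \y_{proj}$ is the outward normal direction to $F$, the point $\y_{proj}$ lies at positive distance from $\partial S \setminus \mathrm{relint}(F)$.

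First I would invoke Lemma \ref{lem:out_bound}: $\Vert \X\wpr - \X\w_\rho\Vert^2 \le \rho C$, where $\X\wpr = \y_{proj}$. Hence $\X\w_\rho$ lies in a ball of radius $\sqrt{\rho C}$ around $\y_{proj}$. Next, choose $\rho$ small enough that this ball is contained in $\CH(S)$ — this is possible because $\y_{proj}$ is in $\CH(S)$ at positive distance from every facet of $\CH(S)$ except possibly $F$, and on the $F$-side the ball stays in $\CH(S)$ because $\X\w_\rho$, being a convex combination of columns of $\X$, cannot cross the boundary face $F$ of $\CH(\X)$ to the exterior; more carefully, one argues that the component of $\X\w_\rho - \y_{proj}$ in the outward-normal ($\y-\y_{proj}$) direction is nonpositive, since $\X\w_\rho\in\CH(\X)$ and $F$ supports $\CH(\X)$ there. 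So for $\rho < d'_{S}/C$ where $d'_S$ is the appropriate squared distance from $\y_{proj}$ to the relevant portion of $\partial S$, we get $\X\w_\rho \in \CH(S)$, and in fact in the interior of $\CH(S)$ (strictly, unless $\X\w_\rho \in F$, which can be absorbed by applying the boundary version, Theorem \ref{thm:TSP_2023_E}).

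Then, exactly as in the proof of Theorem \ref{thm:main_R}: set $\yt := \X\w_\rho \in \CH(S)$, let $\w_e$ solve \eqref{eqn:exact} for $\yt$ (supported on $S$ by Theorem \ref{thm:TSP_2023_E}, or on a face of $S$), note $\Vert\X\w_e - \y\Vert^2 = \Vert\X\w_\rho - \y\Vert^2$, and if $\w_\rho$ had support off $I(S)$ then Lemma \ref{lem:simplex_support} gives $\ell_\y(\w_e) < \ell_\y(\w_\rho)$, so $\w_e$ strictly beats $\w_\rho$ in the objective of \eqref{eqn:relaxed} — a contradiction. Hence $\w_\rho$ is supported on $S$ and is $(d+1)$-sparse.

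The main obstacle is the geometric bookkeeping in the second step: unlike the interior case, the ball around $\y_{proj}$ could a priori stick out of $\CH(S)$ across the face $F$, and one must rule this out. The clean way is to observe that $\X\w_\rho \in \CH(\X)$ always, and $F$ lies on the boundary of $\CH(\X)$ with $\y - \y_{proj}$ pointing strictly outward (by the projection optimality condition in Lemma \ref{lem:out_bound}, $\langle \y - \y_{proj}, \mathbf{v} - \y_{proj}\rangle \le 0$ for all $\mathbf{v}\in\CH(\X)$, so $\CH(\X)$ — and in particular $\X\w_\rho$ — lies in the closed halfspace on the $\y_{proj}$ side of the supporting hyperplane of $F$). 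Therefore the only "escape route" from $\CH(S)$ near $\y_{proj}$ is closed off, and the remaining facets of $S$ are at positive distance, so a small enough $\rho$ suffices; one should also handle the degenerate sub-case $\X\w_\rho \in F$ by appealing to the boundary statement Theorem \ref{thm:TSP_2023_E} rather than Lemma \ref{lem:simplex_support}, which requires interiority.
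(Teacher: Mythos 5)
Your proposal is correct and follows essentially the same route as the paper: localize $\X\w_\rho$ near $\y_{proj}=\X\wpr$ via Lemma \ref{lem:out_bound}, argue that it stays in $\CH(S)$, and then run the contradiction argument of Theorem \ref{thm:main_R} using Theorem \ref{thm:TSP_2023_E} and Lemma \ref{lem:simplex_support}. If anything you are more careful than the paper on the geometric step: your radius threshold is governed by the distance from $\y_{proj}$ to the facets of $S$ other than $F$ (the paper uses only the distance to $\partial F$, which can overestimate the safe radius for very flat simplices), your supporting-hyperplane argument for why $\X\w_\rho$ cannot escape through $F$ is spelled out rather than implicit, and you correctly flag that the degenerate case $\X\w_\rho\in F$ requires appealing to Theorem \ref{thm:TSP_2023_E} rather than Lemma \ref{lem:simplex_support}, whose hypothesis demands interiority.
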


\begin{proof}
    Let $F$ denote the $d$ vertices of the $(d-1)$-face containing $\y_{proj}$ and let $S$ denote the $d+1$ vertices of the $d$-simplex with $F$ as vertices. Note that the $(d-1)$-face identified by $F$ intersects with no other $d$-simplex of $\DT(\X)$ since it lies on the boundary of $\CH(\X)$. Let $\partial F$ denote the boundary of $\CH(F)$, and pick $\rho < \frac{1}{C}\min_{\z \in \partial F} \Vert \z - \y_{proj} \Vert^2$ to ensure that $\X \w_\rho$ lies within the $d$-simplex containing $\y_{proj}$ via Lemma \ref{lem:out_bound}. We note that our choice of $\rho$ implies that $\X \w_{\rho}$ lies either on the face represented by $F$ or on the interior of the $d$-simplex of $\DT(\X)$ containing $F$. Suppose towards a contradiction that there exists an $i \not \in I(S)$ such that $[w_\rho]_i > 0$. Since $\X \w_\rho$ is contained within $\CH(S)$, Theorem \ref{thm:TSP_2023_E} tells us that there exists weights $\w_e$ that are supported only on $S$ and that $ \yt:= \X \w_\rho = \X \w_e$. Thus, $\frac{1}{2}\Vert \X \w_\rho - \y \Vert^2 = \frac{1}{2}\Vert \X \w_e - \y \Vert^2$. Moreover, as $\w_e$ solves $\eqref{eqn:exact}$ for the point $\yt$, we have that $\ell_{\yt}(\w_e) < \ell_{\yt}(\w_\rho)$. Lemma \ref{lem:simplex_support} then allows us to conclude that $\ell_\y(\w_e) < \ell_\y(\w_\rho)$. Now, comparing $\w_e$ to $\w_\rho$ in the objective of \eqref{eqn:relaxed} we have that:
\begin{align*}
        \frac{1}{2}\Vert \X\w_e - \y\Vert^2 + \rho  \sum_{i=1}^n [w_e]_i \Vert \x_i - \y \Vert^2 &< \frac{1}{2}\Vert \X\w_\rho - \y\Vert^2 + \rho  \sum_{i=1}^n [w_\rho]_i \Vert \x_i - \y \Vert^2
    \end{align*}
    which contradicts the assumption that $\w_\rho$ solves $\eqref{eqn:relaxed}$. 
 Thus, the solution must be supported strictly on $S$.
\end{proof}

\subsubsection[X Not In General Position]{$\X$ Not In General Position}
When the points of $\X$ are not in general position there exists a subset of $m>d+1$ points that all lie on the surface of a hypersphere with radius $R$.

\begin{proposition}\label{prop:nonunique}
    Let $\X$ be a set of $m>d+1$ points that lie on the surface of a hypersphere with radius $R$. Let $\y \in \CH(\X)$. Then there exists a constant $K$ such that for any $\w \in \Delta^m$ with $\X\w = \y$, $\displaystyle\sum_{i=1}^m w_i \Vert \x_i - \y \Vert^2 = K$.
\end{proposition}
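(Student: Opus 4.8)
The plan is to exploit the one fact that distinguishes a co-spherical configuration: there is a center $\cb \in \R^d$ with $\Vert \x_i - \cb \Vert = R$ for every $i = 1, \dots, m$, so the locality value $\ell_\y(\w) = \sum_{i=1}^m w_i \Vert \x_i - \y\Vert^2$ ought to ``see'' only the fixed vectors $\y$ and $\cb$ and the scalar $R$, never the particular weights $\w$. Concretely, I would first fix such a center $\cb$, which exists precisely by the hypothesis that the $m$ points lie on a common hypersphere of radius $R$.

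Then I would invoke the translation-invariance identity of Lemma \ref{lem:reg-relation}. Taking $\yt := \y$ — legitimate since $\y \in \CH(\X)$ and every $\w$ under consideration satisfies $\X\w = \y$ — and applying the lemma with $\cb$ in the role of the free point gives $\ell_\y(\w) = \ell_{\cb}(\w) - \Vert \cb - \y \Vert^2$. Since $\ell_\cb(\w) = \sum_{i=1}^m w_i \Vert \x_i - \cb\Vert^2 = R^2 \sum_{i=1}^m w_i = R^2$ because $\w \in \Delta^m$, we conclude $\ell_\y(\w) = R^2 - \Vert \cb - \y\Vert^2$, which depends only on $\X$ (through $\cb$ and $R$) and on $\y$, not on $\w$. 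Setting $K := R^2 - \Vert \cb - \y\Vert^2$ finishes the argument. Equivalently, one can bypass the lemma and expand $\Vert \x_i - \y\Vert^2 = \Vert \x_i - \cb\Vert^2 + \Vert \cb - \y\Vert^2 + 2\langle \x_i - \cb, \cb - \y\rangle$ termwise, then use $\sum_i w_i \x_i = \y$ and $\sum_i w_i = 1$ to reduce the cross term to $2\langle \y - \cb, \cb - \y\rangle = -2\Vert \cb - \y\Vert^2$, again landing on $R^2 - \Vert\cb-\y\Vert^2$.

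There is essentially no real obstacle; the only points requiring care are that the hypothesis genuinely supplies a center $\cb$ (so that $R^2$ is the common squared distance to all the $\x_i$), and that $\y \in \CH(\X)$ is exactly what makes the quantified condition ``$\X\w = \y$ for some $\w \in \Delta^m$'' non-vacuous and licenses the choice $\yt = \y$ in Lemma \ref{lem:reg-relation}. This computation also makes transparent the pathology illustrated in Figure \ref{fig:empty_circumscribing_hypersphere}: when $\X$ is not in general position, the locality functional is constant over the entire family of representations of $\y$ supported on the co-spherical points, so it cannot be used to single out a unique Delaunay simplex — motivating the general-position hypothesis imposed elsewhere in the paper.
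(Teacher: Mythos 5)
Your proof is correct and follows essentially the same route as the paper: both apply Lemma \ref{lem:reg-relation} with the circumcenter $\cb$ as the free point to get $\ell_\y(\w) = R^2 - \Vert \cb - \y\Vert^2$, a quantity independent of $\w$. The direct expansion you sketch as an alternative is just the proof of that lemma inlined, so nothing is genuinely different.
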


\begin{proof}
    Using Lemma \ref{lem:reg-relation} we have for any feasible $\w$ and $\cb$ as the center of the circumscribing hypersphere of radius $R$ that
    \begin{align*}
        \sum_{i=1}^m w_i \Vert \x_i - \y \Vert^2 &= \sum_{i=1}^m w_i \Vert \x_i - \cb \Vert^2 - \Vert \cb - \y \Vert^2\\
        &=\sum_{i=1}^m w_i R^2 - \Vert \cb - \y \Vert^2\\
        &= R^2 - \Vert \cb - \y \Vert^2\\
       & =:K,
    \end{align*}
    which is constant for any feasible $\w$. 
\end{proof}

\begin{proposition}\label{prop:nonunique_sol}
    Let $\X$ not lie in general position. Then there exists a circumscribing hypersphere that intersects more than $d+1$ points of $\X$ and does not contain any other points of $\X$ in the interior. 
 Denote these intersected points as $S$. Let $\y \in \CH(S)$, and let $S_1, S_2, \ldots, S_N$ denote the subsets of $S$ with cardinality $d+1$ such that $\y \in \CH(S_i)$ for $i=1, \ldots, N$ for  $N \geq 1$. Let $\w^1, \w^2, \ldots, \w^N$ be such that $\X \w^i = \y$ for all $i=1,\ldots, N$. Then for any $\w \in \Delta^n$ such that $\X \w = \y$ with support on at least one point of $\X$ not in $S$,
    \begin{equation*}
        \ell_\y(\w^1) = \ell_\y(\w^2) = \dots = \ell_\y(\w^N) < \ell_\y(\w).
    \end{equation*}
\end{proposition}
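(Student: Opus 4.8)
The plan is to reuse the circumsphere computation behind Theorem~\ref{thm:TSP_2023_E} and Proposition~\ref{prop:nonunique}, now carried out for the degenerate hypersphere passing through the $m>d+1$ points of $S$. Write $\cb$ for its center and $R$ for its radius, so that $\Vert \x_i - \cb\Vert = R$ exactly when $\x_i \in S$, and—by the hypothesis on $S$ (it collects \emph{all} points of $\X$ on the sphere, and no further point of $\X$ lies in the open ball)—$\Vert \x_i - \cb\Vert > R$ for every $\x_i \notin S$.

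For the chain of equalities I would argue as follows. Each $\w^i\in\Delta^n$ is the representation of $\y$ supported on the $(d+1)$-subset $S_i\subseteq S$, with $\y\in\CH(S_i)\subseteq\CH(S)$. Applying Proposition~\ref{prop:nonunique} to the point set $S$ (with $\w^i$ restricted to its support) gives $\ell_\y(\w^i) = R^2 - \Vert\cb-\y\Vert^2 =: K$; since $\cb$ and $R$ depend only on $S$ and not on $i$, this yields $\ell_\y(\w^1)=\cdots=\ell_\y(\w^N)=K$. Any ambiguity in $\w^i$ when $S_i$ fails to be affinely independent is harmless, since the value $K$ is the same for every feasible choice.

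For the strict inequality, let $\w\in\Delta^n$ satisfy $\X\w=\y$ with $w_\alpha>0$ for some $\alpha\notin I(S)$. Since $\y\in\CH(S)\subseteq\CH(\X)$, I would invoke Lemma~\ref{lem:reg-relation} with $\tilde\y = \y$ and auxiliary point $\cb$ to obtain
\[
\ell_\y(\w) = \sum_{i=1}^n w_i \Vert \x_i - \cb\Vert^2 - \Vert \cb - \y\Vert^2 .
\]
Bounding each term below by $R^2$—with equality on $I(S)$ and strict inequality at $i=\alpha$, where the coefficient is positive—and using $\sum_{i} w_i = 1$ gives
\[
\ell_\y(\w) > R^2 \sum_{i=1}^n w_i - \Vert\cb-\y\Vert^2 = R^2 - \Vert\cb-\y\Vert^2 = K ,
\]
so $\ell_\y(\w) > K = \ell_\y(\w^1) = \cdots = \ell_\y(\w^N)$, which is the claim.

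I do not expect a genuine obstacle here; the only thing to be careful about is the bookkeeping around $S$—that it is exactly the set of columns of $\X$ on the degenerate circumsphere, so that every column outside $S$ is \emph{strictly} outside the ball—together with checking that Proposition~\ref{prop:nonunique} transfers verbatim with $S$ playing the role of $\X$. Conceptually, the non-general-position degeneracy that destroys uniqueness of $\DT(\X)$ is precisely what makes $\ell_\y$ constant across all representations of $\y$ supported on $S$, so this proposition is the natural analogue of Theorem~\ref{thm:TSP_2023_E} in the degenerate case.
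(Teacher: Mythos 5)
Your proposal is correct and follows essentially the same route as the paper: the equality chain comes from applying Proposition~\ref{prop:nonunique} to the set $S$ on the common circumsphere, and the strict inequality comes from rewriting $\ell_\y(\w)$ via Lemma~\ref{lem:reg-relation} centered at $\cb$ and using that the positively-weighted atom outside $S$ satisfies $\Vert \x_\alpha - \cb\Vert > R$. Your explicit remark that $S$ must contain \emph{all} points of $\X$ on the sphere (so that every other column is strictly outside the ball) is a useful clarification that the paper leaves implicit.
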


\begin{proof}
    Proposition \ref{prop:nonunique} shows that $\ell_\y(\w^1) = \ell_\y(\w^2) = \dots = \ell_\y(\w^N)$ as each $\w^i$ lies on the same circumscribing hypersphere. Let $\w \in \Delta^n$ with support on at least one point of $\X$ not in $S$; let $S'$ denote the support of $\w$. Let $\cb$ denote the center of the circumscribing hypersphere of $S$ with radius $R$. Noting that $\X\w=\y$ and using Lemma \ref{lem:reg-relation} we can write, 
    \begin{align*}
     \ell_\y(\w) &= \sum_{j \in I(S')} w_j \Vert \y - \x_j \Vert^2\\
       &= \sum_{j \in I(S')} w_j \Vert \cb - \x_j \Vert^2 - \Vert \y - \cb \Vert^2.\end{align*}
       Using the fact that there exists at least one index $\alpha$ of $I(S')$ not in $I(S)$ so that $\Vert \cb - \x_{\alpha} \Vert > R$, it follows that for any $\w^i$ and $S_i$,
   \begin{align*}\ell_{\y}(\w)&> \sum_{j \in I(S')} w_j R^2 - \Vert \y - \cb \Vert^2 \\
       &= R^2 - \Vert \y - \cb \Vert^2\\
       &=\sum_{j \in I(S_i)} w^i_j \Vert \cb - \x_j \Vert^2 - \Vert \y - \cb \Vert^2\\
       &= \sum_{j \in I(S_i)}w^i_j \Vert \y - \x_j \Vert^2\\
       &= \ell_\y(\w^i).
    \end{align*}
\end{proof}

\begin{corollary}\label{cor:superposition}
    Under the same conditions as Proposition \ref{prop:nonunique_sol}, the solution to \eqref{eqn:exact} is nonunique and can be represented as $\sum_{i=1}^N \beta_i \w^i$ for any $\beta \in \Delta^N$.
\end{corollary}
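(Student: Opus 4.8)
The plan is to first pin down the entire solution set of \eqref{eqn:exact} under these hypotheses, and then recognize it as the convex hull of $\{\w^1,\dots,\w^N\}$. By Proposition~\ref{prop:nonunique}, every feasible point---meaning $\w\in\Delta^n$ with $\X\w=\y$---whose support is contained in $I(S)$ attains the common objective value $K$, and in particular each $\w^i$ attains $K$. By Proposition~\ref{prop:nonunique_sol}, any feasible point whose support meets a point of $\X$ outside $S$ attains strictly more than $K$. Hence $K$ is the optimal value of \eqref{eqn:exact} and its minimizer set is exactly
\[
P := \{\,\w\in\Delta^n : \X\w=\y,\ \operatorname{supp}(\w)\subseteq I(S)\,\}.
\]

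Next I would establish the inclusion $\operatorname{conv}\{\w^1,\dots,\w^N\}\subseteq P$, which yields the ``for any $\beta\in\Delta^N$'' part of the statement. Fix $\beta\in\Delta^N$ and set $\w:=\sum_{i=1}^N\beta_i\w^i$. Then $\w\in\Delta^n$ because $\Delta^n$ is convex and each $\w^i\in\Delta^n$; $\X\w=\sum_i\beta_i\X\w^i=\sum_i\beta_i\y=\y$ by linearity; and $\operatorname{supp}(\w)\subseteq\bigcup_i\operatorname{supp}(\w^i)\subseteq\bigcup_i I(S_i)\subseteq I(S)$. So $\w\in P$, i.e.\ $\w$ solves \eqref{eqn:exact}. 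Since in the situation of interest the $\w^i$ are not all equal (e.g.\ whenever $\y$ lies in the interior of $\CH(S)$), this set of solutions has more than one element, giving nonuniqueness.

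For the reverse inclusion $P\subseteq\operatorname{conv}\{\w^1,\dots,\w^N\}$, I would use that $P$ is a polytope---the intersection of the bounded simplex $\Delta^n$ with the affine subspace $\{\X\w=\y\}$ and the coordinate flat $\{w_j=0:j\notin I(S)\}$---so by Minkowski's theorem it suffices to show every vertex of $P$ is one of the $\w^i$. A vertex $\mathbf{v}$ of $P$ is a basic feasible solution; since the coordinates of $\w$ indexed by $I(S)$ are constrained only by the $d+1$ linear equations $\X\w=\y$ and $\sum_{j\in I(S)}w_j=1$, we get $|\operatorname{supp}(\mathbf{v})|\le d+1$. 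Then $\y=\X\mathbf{v}$ lies in the convex hull of $\{\x_j:j\in\operatorname{supp}(\mathbf{v})\}$, and that support completes to some $(d+1)$-subset $S_i\subseteq S$ with $\y\in\CH(S_i)$ (possible since $|S|=m>d+1$); when $S_i$ is affinely independent, the feasible vector supported on $I(S_i)$ is unique, so $\mathbf{v}=\w^i$. Thus every element of $P$ is a convex combination of the $\w^i$, completing the identification of the solution set.

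The main obstacle is this last step, and specifically the degenerate subcases it must absorb. A completion $S_i$ of a small support need not be affinely independent---e.g.\ $d+1$ cospherical points may all lie in a $(d-1)$-flat---so ``the'' vector $\w^i$ is not a priori well defined, and one must argue that the vertices of $P$ are still captured among a suitably chosen family $\{\w^i\}$; one must also handle $\y$ lying on a lower-dimensional face of $\CH(S)$, where a vertex of $P$ has support of size strictly less than $d+1$ and is shared by several of the $S_i$. Finally, confirming that the $\w^i$ are genuinely distinct (so that ``nonunique'' is literally true, rather than the solution set collapsing to a point, as it does when $\y$ is itself a vertex of $\CH(S)$) is the other place where the hypotheses need to be invoked with care.
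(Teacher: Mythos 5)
Your forward direction---that every $\sum_{i=1}^N\beta_i\w^i$ with $\beta\in\Delta^N$ solves \eqref{eqn:exact}---is correct and is all the paper actually proves. The paper's argument is a direct computation: $\ell_\y$ is linear in $\w$, so $\ell_\y\bigl(\sum_i\beta_i\w^i\bigr)=\sum_i\beta_i\,\ell_\y(\w^i)$, which equals the common optimal value from Proposition~\ref{prop:nonunique_sol}; feasibility of the convex combination is implicit. Your route through the set $P=\{\w\in\Delta^n:\X\w=\y,\ \operatorname{supp}(\w)\subseteq I(S)\}$ reaches the same conclusion by the same two ingredients (Proposition~\ref{prop:nonunique} for the constant value on $P$, Proposition~\ref{prop:nonunique_sol} for optimality), so this part is essentially the paper's proof. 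Where you diverge is in attempting the reverse inclusion $P\subseteq\operatorname{conv}\{\w^1,\dots,\w^N\}$, i.e.\ an exact characterization of the solution set. The corollary, read as ``for any $\beta\in\Delta^N$ the combination $\sum_i\beta_i\w^i$ is a solution,'' does not require this, and the paper does not claim or prove it. Your vertex/basic-feasible-solution sketch for that converse is plausible but, as you yourself note, leaves real gaps in the degenerate cases (affinely dependent $(d{+}1)$-subsets of a cospherical set, $\y$ on a lower-dimensional face), so it should not be presented as part of the proof of this corollary; if you want the stronger statement, those cases must be resolved. Your observation that ``nonunique'' silently presumes the $\w^i$ are not all identical (e.g.\ fails if $\y$ is a vertex of $\CH(S)$, or if $N=1$) is a fair criticism that applies equally to the paper's own proof.
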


\begin{proof}
    Proposition \ref{prop:nonunique_sol} establishes that each $\w^i$ has optimal locality subject to $\X \w = \y$. Let $\beta \in \Delta^N$, then we can write
    \begin{equation*}
        \ell_{\y}\left( \sum_{i=1}^N \beta_j \w^i\right) = \sum_{j=1}^n \sum_{i=1}^N \beta_i w^i_j \Vert \x_j - \y \Vert^2 = \sum_{i=1}^N \beta_i\sum_{j=1}^n w^i_j \Vert \x_j - \y \Vert^2 = \sum_{i=1}^N \beta_i \ell_{\y}(\w^i)
    \end{equation*}
    which is equal to the same constant as each $\ell_\y (\w^i)$ since $\ell_\y(\w^1) = \ell_\y(\w^2) = \dots = \ell_\y(\w^N)$. 
\end{proof}

\begin{remark}
    A similar result to Corollary \ref{cor:superposition} can be shown for \eqref{eqn:relaxed} under the conditions of Theorem \ref{thm:main_R}, but with $\X$ not lying in general position and $\y$ with the same conditions as Proposition \ref{prop:nonunique_sol}. In this case the argument of Corollary \ref{cor:superposition} can be mimicked with $\X \w_\rho$ in place of $\y$. 
\end{remark}

\subsection{Stability Analysis}

In practical scenarios, data tends to be noisy and susceptible to outliers. This section studies the stability of optimal sparse solutions for \eqref{eqn:relaxed} when subjected to additive noise in the input data. We specifically study the setting in which the initial data $\y$ is perturbed by additive noise, yielding $\tilde{\y}$, and we make the assumption that $\Vert\y-\tilde{\y}\Vert\le \epsilon$. Our stability analysis relies on the concept of a local dictionary, originally introduced in \cite{tasissa2023k}, which we restate below.

\begin{definition}
For a set of points 
$\X$ in general position with a unique Delaunay triangulation $\text{DT}(\X)$, let $\y\in \R^{d}$ be an interior point of a $d$-simplex of $\text{DT}(\X)$. Let $I(S)$ denote a set of indices such that $\{ \x_j \ 
| \ j \in I(S)\}$ are the vertices of a $d$-simplex $S$ of $\DT(\X)$ that contains $\y$. We denote $\X_{I(S)}\in \R^{d\times (d+1)}$ as the \unemph{local dictionary} corresponding to $\y$.
\end{definition}
Using the above definition of a local dictionary, we define barycentric coordinates as follows.
\begin{definition}
    For a local dictionary $\X_{I(S)} \in \R^{d\times (d+1)}$ and a data point $\y\in \R^{d}$, the \unemph{barycentric coordinates} of $\y$ are the unique solution $\x$ to the linear system $\mathbf{B}_{I(S)}\mathbf{x} = \mathbf{z}$,
where $\mathbf{B}_{I(S)}\in \R^{(d+1)\times (d+1)}$ is defined as $\mathbf{B}_{I(S)}= \begin{pmatrix}\X_{I(S)}\\ \one_{d} \end{pmatrix}$ and $\z = \begin{pmatrix}\y \\ 1\end{pmatrix}$. 
\end{definition}

We achieve stability as follows: 

\begin{figure}
    \centering    \includegraphics{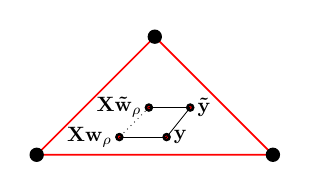}
    \caption{The point $\y$ is perturbed to $\tilde{\y}$. Our analysis relies on estimating the size of the dashed line using triangle inequality given the size of the solid lines.}
    \label{fig:enter-label}
\end{figure}

\begin{theorem}\label{thm:stability_theorem}
 Let $\X$ lie in general position. 
 Let $\y$ and $\tilde{\y}$ lie in $\CH(\X)$ and be interior points of the same $d$-simplex of $\DT(\X)$ with $\Vert\y-\tilde{\y}\Vert\le \epsilon$. Denote the vertices of this Delaunay simplex as $S$.  Let $\rho< \rho^*:= \min\left(\frac{d_{S_{\y}}}{C_{y}},\frac{d_{S_{\tilde{\y}}}}{C_{\tilde{y}}}\right),$
 where $d_{S\y} = \min_{\z \in \partial S} {\Vert \z - \y \Vert^2}$ and $C_{\y}, C_{\tilde{\y}}$ are as in Lemma \ref{lem:rho_bound}. Let $\w_\rho$ be the solution to \eqref{eqn:relaxed} using the true data point $\y$ and let $\tilde{\w}_{\rho}$ be the solution to \eqref{eqn:relaxed} using the noisy data point $\tilde{\y}$. Then we have:
 \[
\Vert\w_{\rho}-\tilde{\w}_{\rho}\Vert\le \frac{1}{\sigma_{\min}(\mathbf{B}_{I(S)})}\,(\sqrt{\rho}(\sqrt{C_{\y}}+\sqrt{C_{\tilde{\y}}})+\epsilon),
 \]
 where $\mathbf{B}_{I(S)}= \begin{pmatrix}\X_{I(S)}\\ \one_{d} \end{pmatrix}$ and $\X_{I(S)}$ is the local dictionary corresponding to $\tilde{\y}$ and where $\sigma_{\min}$ is the smallest singular value of a matrix.
\end{theorem}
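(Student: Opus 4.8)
The plan is to exploit the key structural fact established by Theorem \ref{thm:main_R}: under the stated bound on $\rho$, both $\w_\rho$ and $\tilde{\w}_\rho$ are supported exactly on the $d+1$ vertices $S$ of the common Delaunay simplex. Hence both are determined by barycentric coordinates with respect to the same local dictionary $\X_{I(S)}$: we have $\mathbf{B}_{I(S)} \w_\rho = \begin{pmatrix} \X\w_\rho \\ 1 \end{pmatrix}$ and $\mathbf{B}_{I(S)} \tilde{\w}_\rho = \begin{pmatrix} \X\tilde{\w}_\rho \\ 1 \end{pmatrix}$ (restricting to the support coordinates, where $\mathbf{B}_{I(S)}$ is square and invertible since $\X$ is in general position). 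Subtracting gives $\mathbf{B}_{I(S)}(\w_\rho - \tilde{\w}_\rho) = \begin{pmatrix} \X\w_\rho - \X\tilde{\w}_\rho \\ 0 \end{pmatrix}$, so
\[
\Vert \w_\rho - \tilde{\w}_\rho \Vert \le \frac{1}{\sigma_{\min}(\mathbf{B}_{I(S)})} \Vert \X\w_\rho - \X\tilde{\w}_\rho \Vert.
\]
This reduces everything to bounding $\Vert \X\w_\rho - \X\tilde{\w}_\rho \Vert$ in terms of $\sqrt{\rho}$ and $\epsilon$.

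For that reconstruction-error bound I would insert $\y$ and $\tilde{\y}$ and use the triangle inequality:
\[
\Vert \X\w_\rho - \X\tilde{\w}_\rho \Vert \le \Vert \X\w_\rho - \y \Vert + \Vert \y - \tilde{\y} \Vert + \Vert \tilde{\y} - \X\tilde{\w}_\rho \Vert.
\]
The middle term is at most $\epsilon$ by hypothesis. For the first term, apply Lemma \ref{lem:rho_bound} with the data point $\y$: since $\y$ lies in a $d$-simplex of $\DT(\X)$, $\Vert \X\w_\rho - \y \Vert^2 \le \rho C_\y$, i.e. $\Vert \X\w_\rho - \y\Vert \le \sqrt{\rho C_\y} = \sqrt{\rho}\sqrt{C_\y}$. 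Likewise, applying Lemma \ref{lem:rho_bound} with the data point $\tilde{\y}$ (which also lies in a $d$-simplex of $\DT(\X)$, namely the same one) gives $\Vert \tilde{\y} - \X\tilde{\w}_\rho \Vert \le \sqrt{\rho}\sqrt{C_{\tilde{\y}}}$. Combining, $\Vert \X\w_\rho - \X\tilde{\w}_\rho \Vert \le \sqrt{\rho}(\sqrt{C_\y} + \sqrt{C_{\tilde{\y}}}) + \epsilon$, and substituting into the displayed inequality above yields exactly the claimed bound.

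The one point requiring care — and the main obstacle, though it is conceptual rather than computational — is justifying that the support restriction is legitimate: we need $\rho$ small enough that Theorem \ref{thm:main_R} applies \emph{simultaneously} to both optimization problems, which is precisely why the hypothesis takes $\rho < \rho^* = \min\bigl(\frac{d_{S\y}}{C_\y}, \frac{d_{S\tilde{\y}}}{C_{\tilde{\y}}}\bigr)$. With $\rho < d_{S\y}/C_\y$, Theorem \ref{thm:main_R} forces $\w_\rho$ to be supported on $S$; with $\rho < d_{S\tilde{\y}}/C_{\tilde{\y}}$, the same theorem (applied to the problem with data $\tilde{\y}$, whose containing simplex is also $S$) forces $\tilde{\w}_\rho$ to be supported on $S$. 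Once both are supported on the common vertex set, the square invertible matrix $\mathbf{B}_{I(S)}$ relates the restricted weight vectors to their images together with the affine constraint $\one^T\w = 1$, which is what makes the linear-algebra step go through. I would also note explicitly that $\sigma_{\min}(\mathbf{B}_{I(S)}) > 0$ because general position guarantees $\X_{I(S)}$ has affinely independent columns, so $\mathbf{B}_{I(S)}$ has full rank. The remaining steps are all routine triangle-inequality bookkeeping.
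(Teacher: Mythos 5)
Your proposal is correct and follows essentially the same route as the paper's own proof: invoke Theorem \ref{thm:main_R} at both $\y$ and $\tilde{\y}$ to pin the supports to $S$, bound $\Vert \X\w_\rho - \X\tilde{\w}_\rho\Vert$ by the triangle inequality through Lemma \ref{lem:rho_bound}, and then transfer to the weights via $\sigma_{\min}(\mathbf{B}_{I(S)})$. No gaps.
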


\begin{proof}
With $\rho<\rho^*$, we apply Theorem \ref{thm:main_R} to $\y$ and $\tilde{\y}$. It follows that both $\w_{\rho}$ and $\tilde{\w}_{\rho}$ are $(d+1)$-sparse and supported on $S$. In what follows our first aim is to bound $\Vert\X\w_{\rho}-\X\tilde{\w}_\rho\Vert$. To do so, we start by applying Lemma \ref{lem:rho_bound} on $\y$ and $\tilde{\y}$ to obtain the following bounds:
\begin{align*}
\Vert\X\w_{\rho}-\y\Vert\le \sqrt{\rho} \sqrt{C_{\y}},\\
\Vert \X\tilde{\w}_{\rho}-\tilde{\y}\Vert \le \sqrt{\rho} \sqrt{C_{\tilde{\y}}}.
\end{align*}
Using the above bounds and the assumption that $\Vert\y-\tilde{\y}\Vert\le \epsilon$, we bound $\Vert\X\w_{\rho}-\X\tilde{\w}_\rho\Vert$ as follows:
\begin{align}
\Vert\X\w_{\rho}-\X\tilde{\w}_\rho\Vert  &= 
\Vert(\X\w_{\rho}-\y)+(\y-\tilde{\y})+(\tilde{\y}-\X\tilde{\w}_\rho)\Vert \nonumber\\ 
&\le \Vert\X\w_{\rho}-\y\Vert+\Vert\y-\tilde{\y}\Vert+\Vert\tilde{\y}-\X\tilde{\w}_\rho\Vert \nonumber\\ 
&\le \sqrt{\rho}(\sqrt{C_{\y}}+\sqrt{C_{\tilde{\y}}})+\epsilon. \label{eq:bound_xw_xwn}
\end{align}
Note that $\X\w_{\rho}$ and $\X\tilde{\w}_{\rho}$ are in the same $d$-simplex $S$ of $\text{DT}(\X)$. It follows that they have the same local dictionary denoted by $\X_{I(S)}$. Using this and the fact that $\w_{\rho}$ and $\tilde{\w}_{\rho}$ are $(d+1)$-sparse and supported on $S$, we have $\X\w_{\rho}= \X_{I(S)}\z^*$ and $\X\tilde{\w}_{\rho}= \X_{I(S)}\z$ where $\z^*,\z$ in $\Delta^{d+1}$ respectively are $\w_{\rho}$ and $\tilde{\w}_{\rho}$ restricted to $I(S)$. Define $\mathbf{q},\tilde{\mathbf{q}}\in \R^{d+1}$ as follows: $\mathbf{q} = \begin{pmatrix} \X\w_{\rho}\\1 \end{pmatrix}$ and $\tilde{\mathbf{q}} = \begin{pmatrix} \X\tilde{\w}_{\rho}\\1 \end{pmatrix}$. Note that $\Vert\mathbf{q}-\tilde{\mathbf{q}}\Vert=\Vert\X\w_{\rho}-\X\tilde{\w}_\rho\Vert $. Further, $\mathbf{q} = \mathbf{B}_{I(S)}\z^*$ and $\tilde{\mathbf{q}}=\mathbf{B}_{I(S)}\z$. We now lower bound  $\Vert\mathbf{q}-\tilde{\mathbf{q}}\Vert$ as follows:
\begin{equation*}
\Vert\mathbf{q}-\tilde{\mathbf{q}}\Vert = \Vert\mathbf{B}_{I(S)}(\z^*-\z)\Vert\ge \sigma_{\min}(\mathbf{B}_{I(S)})\Vert\z^*-\z\Vert,\\
\end{equation*}
\noindent where $\sigma_{\min}(\mathbf{B}_{I(S)})>0$ because the vertices of the Delaunay simplex are in general position. Combining the above bound with the bound in \eqref{eq:bound_xw_xwn} yields
\begin{equation*}
\Vert\z^*-\z\Vert \le\frac{1}{\sigma_{\min}(\mathbf{B}_{I(S)})}\,(\sqrt{\rho}(\sqrt{C_{\y}}+\sqrt{C_{\tilde{\y}}})+\epsilon).\end{equation*}

\noindent The above equality establishes that the barycentric representations of $\X\w_{\rho}$ and $\X\tilde{\w}_{\rho}$ are bounded. Since $\w_{\rho}$ and $\tilde{\w}_{\rho}$ are $(d+1)$-sparse and supported on $S$, the proof concludes by noting that $\Vert\z^*-\z\Vert = \Vert\w_{\rho}-\tilde{\w}_{\rho}\Vert$. 

\end{proof}

\section{Experiments}\label{sec:experiments}
In each of the below experiments we solve \eqref{eqn:relaxed} using the CVXOPT library \cite{cvxopt} with a custom KKT system solver to take advantage of structure of the problem. Details of this custom component are provided in Appendix \ref{sec:app-comp}. 

\subsection{Comparison of Theoretical Bound}
 Theorem \ref{thm:main_R} provides a bound that guarantees that the vertices of the Delaunay simplex containing a point $\y$ will be identified via solving (\ref{eqn:relaxed}). In this experiment we compare how well the theory matches reality. To demonstrate this in a visual way we let $d=2$ and $n=10$. $\X$ is chosen to have be uniform samples on the unit square. $\X$ is then centered about the mean and normalized to have maximum $\ell_2$ norm of $1$. We sample 10,000 points from the convex hull. We then solve \eqref{eqn:relaxed} for each point with $\rho=2^k$ as a way of quantizing the search space of $\rho$; here $k\in\{-32, -31, \ldots, 2\}$. We find the largest $k$ such that the true vertices of the Delaunay simplex are identified. We compare the results in Figure \ref{fig:2Drhobound}. We observe that in general $\rho$ can be much larger than the bound in Theorem \ref{thm:main_R} suggests. This is especially so for points that lie near the boundary of a simplex, as the theory suggests $\rho$ would need to be very small, while we observe much larger $\rho$ work. 

 \begin{figure}[t!]
     \centering
\includegraphics[width=\linewidth]{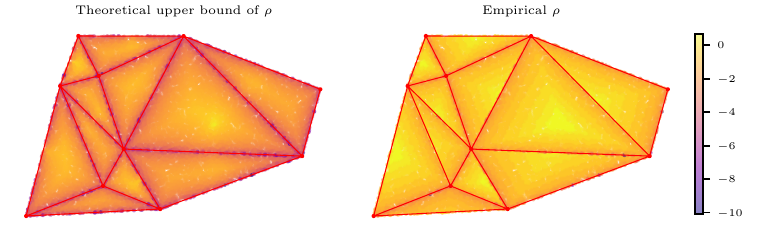}
     \caption{Sampled points within the convex hull colored by: \emph{left}: the bound from Theorem \ref{thm:main_R} and \emph{right}: the approximate $\rho$ needed empirically to determine the true vertices quantized to the nearest power of 2 (hence the distinct color bands). Points are colored according to $\log_{10}(\rho)$. In general the bound is pessimistic as to how small $\rho$ needs to be. In particular, we note that several triangles have points at the boundaries obtaining a true solution with relatively high $\rho$, where the theory suggests that points near the boundary of a triangle could need very small $\rho$.}
     \label{fig:2Drhobound}
 \end{figure}

\subsection{Solution Path}
We consider the optimal solutions of the locality regularized least squares problem as a function of $\rho$: 
\begin{align}\label{eqn:relaxed_solutions}
\w_{\rho}^* := \argmin_{\mathbf{w}\in \Delta^{n}} \frac{1}{2}\lVert \mathbf{X}\mathbf{w} - \mathbf{y}\rVert^2 + \rho \sum_{i=1}^n w_i \lVert \mathbf{x}_i - \mathbf{y} \rVert^2. 
\end{align}
We define the set $\{\w_{\rho}^*:\rho\in [0,\infty]\}$ to be the solution path. The solution paths for the standard and generalized LASSO problem have been studied and characterized in  \cite{osborne2000new,tibshirani2011solution}. In particular, these characterizations establish that the solution path is piecewise linear with respect to the regularization parameter. Various approaches, such as the least squares regression (LARS) method proposed in \cite{efron2004least} and the homotopy algorithms outlined in \cite{donoho2008fast,osborne2000new} utilize this structure to design efficient algorithms. 

In this paper, our focus is in exploring the solution paths of \eqref{eqn:relaxed_solutions} which falls under the category of regularized quadratic programs. It is generally known that generic classes of these programs give piecewise linear solution paths as functions of the regularization parameter 
 \cite{gu2017solution,gu2015new}. First we show an example in Figure \ref{fig:2Dsolutionpath} of the reconstructed point in $d=2$ as $\rho$ varies from choosing the nearest neighbor to exact representation in terms of the vertices of the Delaunay simplex containing the point. To view the solution path in higher dimensions we show an example with $d=5$ in Figure \ref{fig:highDsolutionpath}. We plot each weight as a function of $\rho$. Note that some weights become non-zero for a sub-interval, so one cannot conclude that the solution path adds one vertex of the Delaunay simplex at a time.

\begin{figure}[t!]
    \centering
    \includegraphics[width=\linewidth]{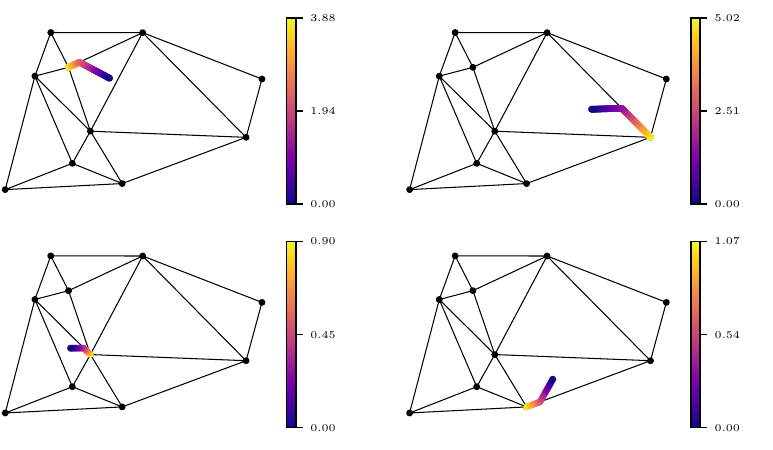}
    \caption{Each of these plots shows the representations formed by the solution path of $\w_\rho$ for a $d=2$ triangulation. Each path is colored as the representation progresses from the nearest neighbor (large $\rho$) to $\y$ (small $\rho)$.  As $\rho$ increases, the solution passes from being dense (3-sparse), to approximately 2-sparse to approximately 1-sparse.}
    \label{fig:2Dsolutionpath}
\end{figure}

\begin{figure}
    \centering
    \includegraphics[width=\linewidth]{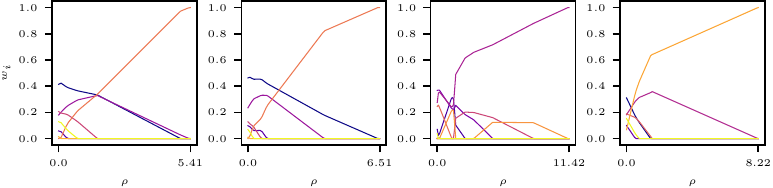}
    \caption{Solution paths of $\rho$: coefficient values versus $\rho$. In these plots, $\y$ was chosen in $\R^5$ and the coefficient values $w_i$ are plotted individually versus $\rho$. The $\rho$-axis is truncated to start on the right to only include a small portion of the regime where $\w$ has one nonzero entry on the nearest neighbor. Each plot shows the expected piecewise linear solution paths.  Moreover, we observe sparsity emerge with increasing $\rho$.}
    \label{fig:highDsolutionpath}
\end{figure}

\subsection{Comparison With Other Delaunay Triangle Identification Methods}
We provide an overview of comparisons between the four methods discussed within this paper that can be used to identify the $d$-simplex of $\DT(\X)$ containing a given $\y$. These comparisons are summarized in Table \ref{tab:comparison}.
\renewcommand{\arraystretch}{1.25}
\begin{table}[htbp]
\centering
\begin{tabular}{|c||c|c|c|c|c|} 
 \hline
 Method & Does Not Require Unique $\DT(\X)$ & Projects $\y \not \in \CH(\X)$  \\ 
 \hline\hline
 Convex Hull LP & \wmark & \wmark \\ 
 \hline
 DelaunaySparse & \cmark & \wmark \\
 \hline
 \eqref{eqn:exact} & \wmark & \wmark \\
 \hline
 \eqref{eqn:relaxed} & \wmark & \cmark   \\
 \hline
\end{tabular}
\caption{Applicability of each method for nonstandard situations.}
\label{tab:comparison}

\end{table}

\subsubsection{Non-Unique Delaunay Triangulation}
When $\X$ is not in general position then there are multiple Delaunay triangulations, as visualized in Figure \ref{fig:empty_circumscribing_hypersphere}. In this case $\y$ may lie in a circumscribing hypersphere containing $>d+1$ points. The locality based methods do not distinguish between possible solutions due to Proposition \ref{prop:nonunique_sol} and Corollary \ref{cor:superposition}. Similarly, the lifting map projects points on the boundary of same hypersphere to the same lower face of the convex hull of the lifted points. DelaunaySparse will provide an arbitrary valid $d$-simplex containing $\y$; for details see \cite{chang2020algorithm}.

\subsubsection{Handling Queries Outside The Convex Hull}
The Convex Hull LP becomes unbounded when $\y$ lies outside the convex hull and thus cannot provide a useful solution. DelaunaySparse does not provide for when $\y$ lies outside the convex hull, but can detect when it occurs. The code provided with their paper includes a subroutine to solve the projection problem. The exact locality problem (\ref{eqn:exact}) by definition can only handle points within the convex hull, but the relaxation (\ref{eqn:relaxed}) can provide solutions arbitrarily close to the projection as described in Lemma \ref{lem:out_bound}. 

\subsubsection{Empirical Scaling}
 To give a sense about how these methods' runtimes compare in practice, we demonstrate the scaling across $n$ and $d$ respectively. For each $n,d$ pair we generate $\X$ with each vector as a uniform sample from the unit $d$-hypercube. Then, we sample 50 points from $\CH(\X)$ by sampling $\w$ uniformly from $\Delta^n$ and forming $\y = \X \w$. For each sample we solve each of the methods and measure the runtimes; here Convex Hull LP and DelaunaySparse serve as the baselines. We solve Convex Hull LP, \eqref{eqn:exact}, and \eqref{eqn:relaxed} using CVXOPT in Python, and for $\eqref{eqn:relaxed}$ we fix $\rho=10^{-7}$. For DelaunaySparse we use code provided by the authors\footnote{\url{https://vtopt.github.io/DelaunaySparse/}}. We note that CVXOPT solves LPs and QPs approximately via an interior point solver, but thresholding the returned weights often matches the true vertices. We do not verify that the returned vertices are correct as we are focused on measuring the runtimes, which we found to not significantly change when tuning the threshold and $\rho$ for correct identification. Results are shown in Figure \ref{fig:scaling} with mean and standard deviation of the runtimes plotted.  We see that the all methods scale roughly the same in $n$. Convex Hull LP, \eqref{eqn:exact}, and \eqref{eqn:relaxed} all scale similarly in $d$ while DelaunaySparse scales noticeably worse. We note that, aside from DelaunaySparse, these methods are not optimized, but the scaling patterns should remain unaffected. We measured all results using a maximum of 8 cores from an Intel Xeon Gold 6248, which was provisioned by a high-performance computing cluster.

    \begin{figure}[htbp]
        \centering
        \includegraphics[width=\linewidth]{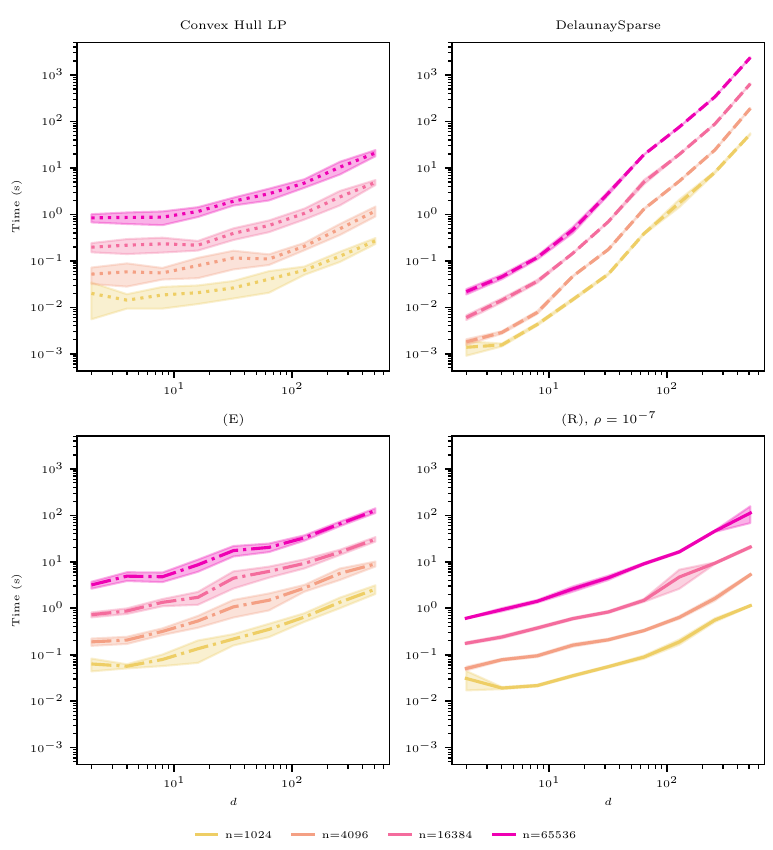}
        \caption{We show empirical scaling results comparing runtime (s) versus data dimension ($d$) for various $n$. We note that all methods scale approximately equally in $n$. Convex Hull LP, \eqref{eqn:exact}, and \eqref{eqn:relaxed} scale approximately equally in $d$. DelaunaySparse scales noticeably worse in $d$, which validates the theoretical curves.}
        \label{fig:scaling}
    \end{figure}

\subsubsection[Empirical Comparison of w\_ rho to w\_e]{Empirical Comparison of $\w_\rho$ to $\w_e$}

We demonstrate how $\w_\rho$, the solution to \eqref{eqn:relaxed}, compares to $\w_e$, the solution to \eqref{eqn:exact}, as $\rho$ changes. 
With $n$ fixed as $n=250$, we vary $d=[3, 9, 27, 81]$ and generate $\X$ with each vector as a uniform sample from the unit $d$-hypercube. For each $d$ we sample 50 points from $\CH(\X)$ by sampling $\w$ uniformly from $\Delta^n$ and forming $\y = \X \w$. Then we find $\w_e$ by solving \eqref{eqn:exact} using the simplex method and compare to $\w_\rho$ for $\rho=1.5^k$ for $k\in\{-32, -31,\ldots, 19\}$. For each sample we compute $\Vert \w_e -\w_\rho\Vert_1$ and measure the accuracy of the support identification by computing $\frac{\vert I(S)\cap I(S') \vert}{\vert I(S)\cup I(S')\vert}$ where $I(S)$ denotes the indices of the support of $\w_\rho$ and $I(S')$ denotes the indices of the support of $\w_e$. Results are visualized in Figure \ref{fig:rho_effect}.

\begin{figure}[htbp]
    \centering
    \includegraphics[width=\linewidth]{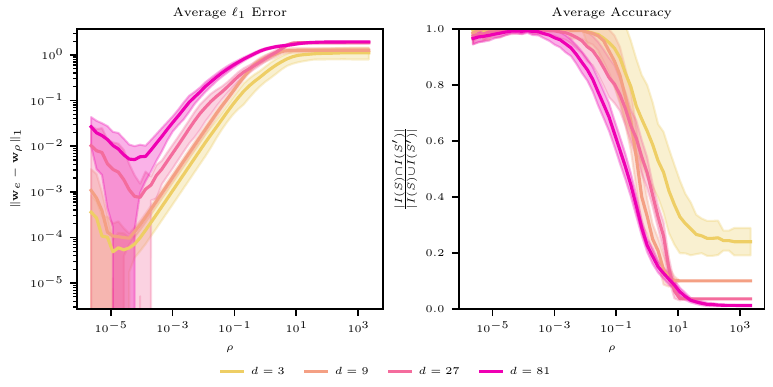}
    \caption{How $\w_\rho$ compares to $\w_e$ as $\rho$ varies. We note that numerical issues begin to arise as $\rho$ becomes small, causing deviation from the theory.}
    \label{fig:rho_effect}
\end{figure}

\section{Conclusion}
\label{sec:conclusion}

We analyzed a locality-regularized coding problem (\ref{eqn:relaxed}), establishing sparsity of its solutions under mild constraints.  Applications to the problem of identifying the $d$-simplex in $\DT(\X)$ that contains a specified point $\y$ were established and shown to be competitive with existing methods in terms of empirical runtime and scaling with dimension of data.  In particular, the proposed approach is useful even in the case that the observed data does not belong to the convex hull of the generating dictionary atoms.  

The proposed relaxed problem gives interpretable features that can be used for unsupervised and semisupervised learning.  Applications of these features (for both fixed and learned dictionaries) to high-dimensional image processing is a topic of future work.  Moreover, while the proposed schema is for data in $\mathbb{R}^{d}$, related notions of locality-regularized learning have been considered in the Wasserstein space \cite{mueller2023geometrically}.  Extending our results on sparsity to this context is an interesting future work, albeit one complicated by the curvature of the Wasserstein space.

\vspace{10pt}

\noindent\textbf{Acknowledgments:}  We are grateful for stimulating conversations with Demba Ba (Harvard), Tyler Chang (Argonne National Lab), Andrew Gillette  (Lawrence Livermore National Lab), and Matthew Hudes (Johns Hopkins). The authors acknowledge the Tufts University High Performance Compute Cluster\footnote{\url{https://it.tufts.edu/high-performance-computing}} which was utilized for the research reported in this paper.

\bibliographystyle{ieeetr}
\bibliography{bib}

\appendix
\section{Quadratic Program Computational Aspects}\label{sec:app-comp}

Recall that the main optimization problem we aim to solve is given by 
\begin{align*}
\argmin_{\w \in \Delta^n} \quad\frac{1}{2}\lVert \mathbf{X}\mathbf{w} - \mathbf{y}\rVert^2 + \rho \,\mathbf{w}^T\mathbf{c},
\end{align*}
where $\mathbf{c}\in \R^n$ and is defined as $c_i := \Vert \mathbf{x}_i - \mathbf{y}\Vert^2$.
This is a quadratic program and its standard form is: 
\begin{align}\label{eqn:QP2}
    \argmin_{\mathbf{w} \in \R^n} \quad \frac{1}{2} \mathbf{w}^T \X^T\X\mathbf{w}  + \mathbf{w}^T(\rho \mathbf{c} - \X^T\mathbf{y})\quad
    \text{s.t.} \quad\one_n^T \mathbf{w} = 1\text{ and }
    -\mathbf{I}_n \mathbf{w} \leq \mathbf{0}_n.
\end{align}
Note that, while the objectives of the two minimization programs differ, they both admit the same minimizers. The quadratic program (\ref{eqn:QP2}) can be solved effectively using an interior point solver. In our implementation, we use the CVXOPT library \cite{cvxopt}. We note that the dominant computational cost is solving the following KKT system at each iteration\footnote{\url{http://cvxopt.org/userguide/coneprog.html\#exploiting-structure}}
\begin{equation}
    \begin{bmatrix}
        \X^T\X & \one_n & -\mathbf{I}_n \\
        \one_n^T & 0 & 0\\
        -\mathbf{I}_n & \mathbf{0} & \mathbf{D}
    \end{bmatrix} 
    \begin{bmatrix}
        \mathbf{u_w} \\ u_y \\ \mathbf{u_z} 
    \end{bmatrix}
    =
    \begin{bmatrix}
        \mathbf{b_w} \\ b_y \\ \mathbf{b_z} 
    \end{bmatrix}
\end{equation}
In the above system, $\mathbf{D}$ is a diagonal matrix. In each iteration, given $\mathbf{D}$ and the right hand side vector $\mathbf{b}$, we solve for the vector $\mathbf{u}$. A direct solution of this problem requires solving an $O(n)$ linear system. This approach exhibits inefficient scaling since the cost of explicit construction of $\X^T\X$ is $O(n^2d)$. By avoiding the explicit formation of $\X^T\X$, we can employ manipulations on the linear system to achieve an indirect solution. In what follows, we detail the process.

Using the first and last block equations, we can equivalently write $\mathbf{u}_z$ as:
\begin{equation*}
    \mathbf{u}_z = \mathbf{D}^{-1}(\mathbf{u_w} + \mathbf{b_z}).
\end{equation*}
We can then reduce the linear system to two block equations: 
\begin{equation*}
    \begin{bmatrix}
        \X^T\X- \mathbf{D}^{-1}& \one_n  \\
        \one_n^T & 0 
    \end{bmatrix} 
    \begin{bmatrix}
        \mathbf{u_w} \\ u_y
    \end{bmatrix}
    =
    \begin{bmatrix}
        \mathbf{b_w} + \mathbf{D}^{-1}\mathbf{b_z} \\ b_y
    \end{bmatrix}
\end{equation*}
The first equation can be rewritten as a fixed point equation in terms of $\mathbf{u_x}$: 
\begin{equation*}
    \mathbf{u_w} = \mathbf{D}\left(\X^T\X\mathbf{u_w} + u_y \one_n - \mathbf{b_w} - \mathbf{D}^{-1}\mathbf{b_z}  \right)
\end{equation*}
We can then use the above form and in the second equation to obtain:
\begin{equation*}
    \one_n^T \left( \mathbf{D}\left(\X^T\X\mathbf{u_w} + u_y \one_n -\mathbf{b_w} - \mathbf{D}^{-1}\mathbf{b_z} \right)\right) = b_y
\end{equation*}
We can then solve for the scalar $u_y$ as follows:
\begin{equation*}
    u_y = \frac{b_y - \one_n^T\left(\mathbf{D}\mathbf{X}^T\mathbf{X}\mathbf{u_w} - \mathbf{D}\mathbf{b_w} - \mathbf{b_z}  \right)}{\Tr(\mathbf{D})}.
\end{equation*}
Now we can plug this expression for $u_y$ into the first equation to obtain:
\begin{equation*}
    \left( \X^T\X - \mathbf{D}^{-1} - \frac{\one_n \one_n^T\mathbf{D}\X^T\X}{\Tr(\mathbf{D})} \right) \mathbf{u_w} = \mathbf{b_w} + \mathbf{D}^{-1}\mathbf{b_z} - \frac{b_y + \one_n^T\left(\mathbf{D}\mathbf{b_w} + \mathbf{b_z}  \right)}{\Tr(\mathbf{D})} \one
\end{equation*}
We now introduce a change of variables $\X \mathbf{u_w} = \mathbf{v}$ to create a new linear system:
\begin{equation*}
    \begin{bmatrix}
        \X^T - \frac{\one_n \one_n^T \mathbf{D}\X^T}{\Tr(\mathbf{D})} & -\mathbf{D}^{-1} \\ 
        -\mathbf{I}_d & \X
    \end{bmatrix}
    \begin{bmatrix}
        \mathbf{v} \\ \mathbf{u_x}
    \end{bmatrix}
    =
    \begin{bmatrix}
        \mathbf{b_w} + \mathbf{D}^{-1}\mathbf{b_z} - \frac{b_y + \one_n^T\left(\mathbf{D}\mathbf{b_w} + \mathbf{b_z}  \right)}{\Tr(D)} \one_n \\
    \mathbf{0}
    \end{bmatrix}.
\end{equation*}
Using the first block equation we can solve for $\mathbf{u_x}$: 
\begin{equation*}
    \mathbf{u_w} = \mathbf{D} \left[ \left(\X^T - \frac{\one_n \one_n^T D\X^T}{\Tr(\mathbf{D})}  \right)\mathbf{v} -  \left( \mathbf{b_w} + \mathbf{D}^{-1}\mathbf{b_z} - \frac{b_y + \one_n^T(\mathbf{D}\mathbf{b_w} + \mathbf{b_z}  )}{\Tr(\mathbf{D})} \one_n\right) \right].
\end{equation*}
Now we substitute this into the second equation to get a linear system for $\mathbf{v}$: 
\begin{equation*}
    \left(\X\mathbf{D}  \left(\X^T - \frac{\one_n \one_n^T \mathbf{D}\X^T}{\Tr(\mathbf{D})}\right) - \mathbf{I}_d  \right)\mathbf{v} = \X\mathbf{D}\left( \mathbf{b_w} + \mathbf{D}^{-1}\mathbf{b_z} - \frac{b_y + \one_n^T(\mathbf{D}\mathbf{b_w} + \mathbf{b_z}  )}{\Tr(\mathbf{D})} \one_n\right).
\end{equation*}
The above is now a linear system of $d$ variables which we can solve. After that, we can use the above derivation in reverse to compute $\mathbf{u_x}, u_y, \mathbf{u_z}$.

\end{document}